\documentclass{article}

\usepackage{microtype}
\usepackage{graphicx}
\usepackage{subcaption}
\usepackage{booktabs} 
\usepackage{multirow} 
\usepackage{multicol} 
\usepackage{arydshln} 

\usepackage{hyperref}

\usepackage[preprint]{icml2026}

\usepackage{amsmath}
\usepackage{amssymb}
\usepackage{mathtools}
\usepackage{amsthm}
\usepackage{thmtools} 
\usepackage{thm-restate}

\usepackage{multirow}
\usepackage{subcaption}
\usepackage{colortbl}
\usepackage[table ]{ xcolor}
\definecolor{mygray}{gray}{.92}
\definecolor{mygreen2}{RGB}{238, 243, 243}

\definecolor{hight_light}{RGB}{224, 241, 239}

\usepackage[capitalize,noabbrev]{cleveref}

\theoremstyle{plain}
\newtheorem{theorem}{Theorem}[section]

\usepackage[textsize=tiny]{todonotes}

\icmltitlerunning{Hyper-LLaVA: Hyperbolic Uncertainty-aware
  Modality-Balanced Routing for Multimodal Continual Instruction Tuning}

\begin{document}

\twocolumn[
  \icmltitle{Hyper-LLaVA: Hyperbolic Uncertainty-aware
  Modality-Balanced Routing\\ for Multimodal Continual Instruction Tuning}



  \icmlsetsymbol{equal}{*}
  \icmlsetsymbol{corr}{$\dag$}

  \begin{icmlauthorlist}
    \icmlauthor{Kunlun Xu}{equal,pku}
    \icmlauthor{Yanqin Zhang}{equal,pku}
    \icmlauthor{Wenwen Qiang}{iscas,ucas}
    \icmlauthor{Jiahuan Zhou}{corr,pku}
  \end{icmlauthorlist}

  \icmlaffiliation{pku}{Wangxuan Institute of Computer Technology, Peking University, Beijing, China}
  \icmlaffiliation{iscas}{Institute of Software Chinese Academy of Sciences, Beijing, China}
  \icmlaffiliation{ucas}{University of Chinese Academy of Sciences, Beijing, China. \\ \textbf{Accepted by ICML 2026}}

  \icmlcorrespondingauthor{Jiahuan Zhou}{jiahuanzhou@pku.edu.cn}

  \icmlkeywords{Machine Learning, ICML}

  \vskip 0.3in
]



\printAffiliationsAndNotice{\icmlEqualContribution}

\begin{abstract}
Multimodal Continual Instruction Tuning (MCIT) aims to exploit the incrementally accumulated knowledge to process multimodal inputs of diverse tasks, where parameter routing plays an important role. State-of-the-art methods rely on sample-to-task center similarity and cross-modal fusion with equal weight during routing. However, such solutions face two fundamental flaws: (1) Within each modality, the sample-to-task center distance is sub-optimal for routing since the abundant intra-task diversity information is underleveraged. (2) Different modalities exhibit varying reliability across tasks, where the modality with inter-task ambiguity can easily misguide the routing result.  To address these problems, we propose Hyperbolic Uncertainty-aware Modality-Balanced Routing (Hyper-LLaVA) to improve parameter routing capacity based on cross-modality task feature uncertainty modeling. Specifically, to improve intra-modality task matching, Hyper-LLaVA accesses the sample-to-task distribution similarity in the Hyperbolic space. Besides, to alleviate the degradation brought by unreliable modalities, Hyper-LLaVA quantifies the task matching ambiguity within each modality to achieve adaptive balancing between task matching across modalities. Based on the complementary intra- and inter-modality task matching enhancement, our Hyper-LLaVA outperforms state-of-the-art approaches by large margins. Our source code is available at \href{https://github.com/zhoujiahuan1991/ICML2026-Hyper-LLaVA}{https://github.com/zhoujiahuan1991/ICML2026-Hyper-LLaVA}

\end{abstract}

\begin{figure}[t]
    \begin{center}
    \vspace{-5pt}
	\includegraphics[width=1\linewidth]{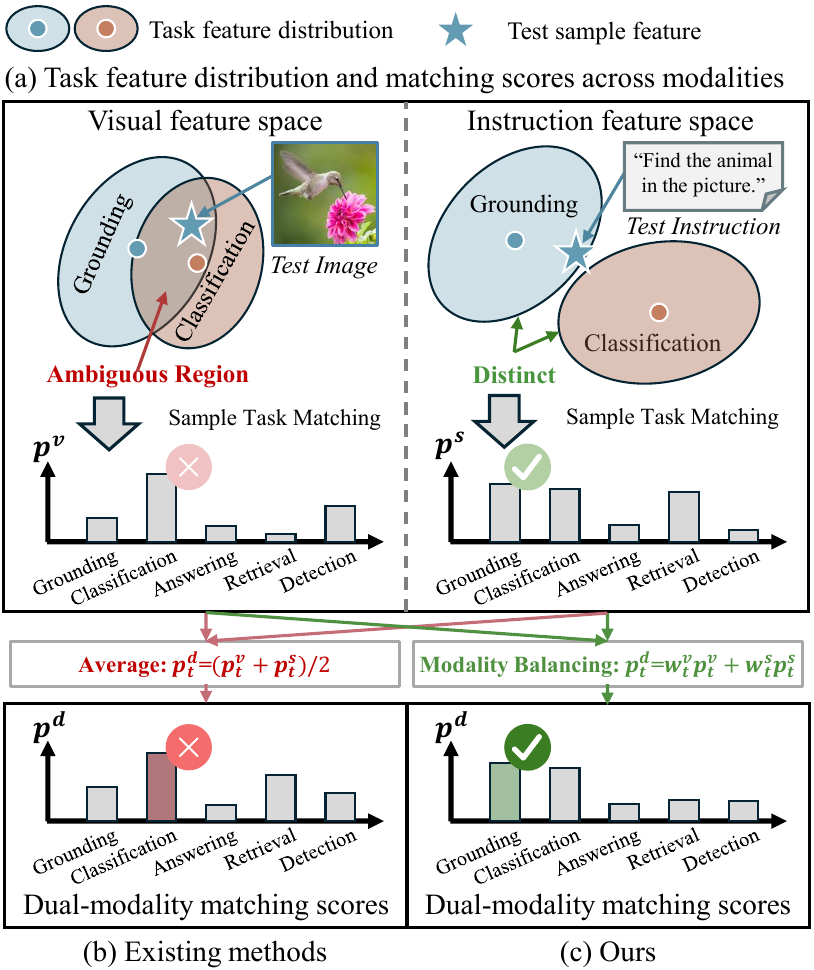}
	\caption{(a) Within different modalities, the task feature regions may be ambiguous or distinct. The ambiguous task feature regions can often result in high erroneous task matching scores. and distinct task feature distribution may also generate low correct task matching scores. (b) Existing methods are vulnerable to erroneous task matching scores due to simply averaging the dual-modality predictions. (c) Our method aims to seek an adaptive balance between modalities to generate reliable task selection.
    }
	\label{fig:motivation}
    \end{center}    
    \vspace{-8pt}
\end{figure}

\section{Introduction}
Multi-modal Large Language Models (MLLMs)~\cite{yin2024survey} extend the reasoning capabilities of Large Language Models (LLMs)~\cite{touvron2023llama} to the visual domain by integrating visual encoders~\cite{radford2021learning}. To bridge the gap between large-scale pre-training and downstream task, instruction tuning~\cite{zhao2025chartedit, ouyang2022training} has been introduced, where MLLMs are finetuned on paired (\textit{instruction}, \textit{image}, \textit{response}) data to enhance their ability to follow user commands.
However, in practical scenarios, it is often infeasible to collect data for all downstream tasks in advance. Instead, task data typically arrive sequentially, requiring the model to continuously acquire new capabilities over time. This setting gives rise to Multimodal Continual Instruction Tuning (MCIT)~\cite{he2023continual,zheng2024beyond}, which aims to enable MLLMs to progressively learn new tasks while preserving performance on previously learned ones.

Since MLLMs have already acquired abundant fundamental knowledge during large-scale pre-training, recent MCIT methods are typically built upon Mixture-of-Experts architectures~\cite{wang2025smolora,guo2025hide}. Usually, the pretrained parameters are kept frozen, while task-specific experts, \textit{e.g.}, prompts~\cite{zeng2025modalprompt,li2024exemplar,xu2025componential,zhang2025scap,xu2026c} or Low-Rank Adaptation (LoRA) modules~\cite{chen2024coin}, are introduced to encode incrementally arriving task knowledge. Such methods perform intra-modality task matching by measuring the distance between a test sample and task centers, and then generate the final task matching results by averaging the predictions from the visual and instruction (textual) modalities.

However, such paradigms suffer from two fundamental limitations. First, within each modality, routing based solely on the distance between a sample and a task center is sub-optimal, as the abundant intra-task diversity information is largely underexploited~\cite{xu2024distribution,zhou2025distribution}. Second, as illustrated in Figure~\ref{fig:motivation} (a), the same task may exhibit varying levels of reliability across different modalities due to task distribution overlap. In task-ambiguous regions, test samples may produce erroneously high task-matching scores, whereas samples located near distinct task distribution boundaries may yield low scores despite being correctly matched. When these inter-modality scores are naively averaged, unreliable predictions from one modality can dominate the final decision, thereby misleading task prediction and degrading overall performance.

To address these problems, we propose a novel MCIT framework named Hyperbolic Uncertainty-aware Modality-Balanced Routing (Hyper-LLaVA), which improves both intra-modality matching and inter-modality aggregation from an uncertainty-aware perspective. Given the training data of each incoming task, Hyper-LLaVA not only learns the corresponding task-specific expert but also estimates the statistical properties of the task data distribution. During inference, to enhance intra-modality task prediction, both sample features and task distributions are projected into a hyperbolic space, where similarity is measured using the uncertainty quantified by the Poincaré distance, which provides improved robustness to complex distributional variations.
Furthermore, to mitigate erroneous task predictions caused by unreliable modalities, Hyper-LLaVA estimates inter-task separation to quantify task ambiguity within each modality. Based on this estimation, modality-wise weights are adaptively assigned to each task through a dual-modality ambiguity balancing mechanism, as shown in Figure~\ref{fig:motivation}(b). By jointly enhancing intra-modality similarity estimation and inter-modality score aggregation, Hyper-LLaVA achieves significantly more accurate parameter routing, outperforming state-of-the-art MCIT approaches by a large margin.


To sum up, our contributions include:\\
(1) We propose Hyper-LLaVA, a novel MCIT framework that addresses two key limitations of existing methods: poor modeling of intra-task distributional diversity, and unreliable inter-modality aggregation under task ambiguity.\\
(2) Our approach jointly enhances routing by embedding samples and task distributions into hyperbolic space for robust intra-modality similarity measurement, and adaptively balancing modalities based on ambiguity estimated from inter-task separation.\\
(3) Extensive experiments on two challenging MCIT benchmarks show Hyper-LLaVA outperforms state-of-the-art MCIT methods by a large margin.
\section*{Conflict of Interest Disclosure}
The authors declare that they have no known competing financial interests or personal relationships that could have appeared to influence the work reported in this paper.

\begin{figure*}[t] 
  \centering
  \includegraphics[width=\textwidth]{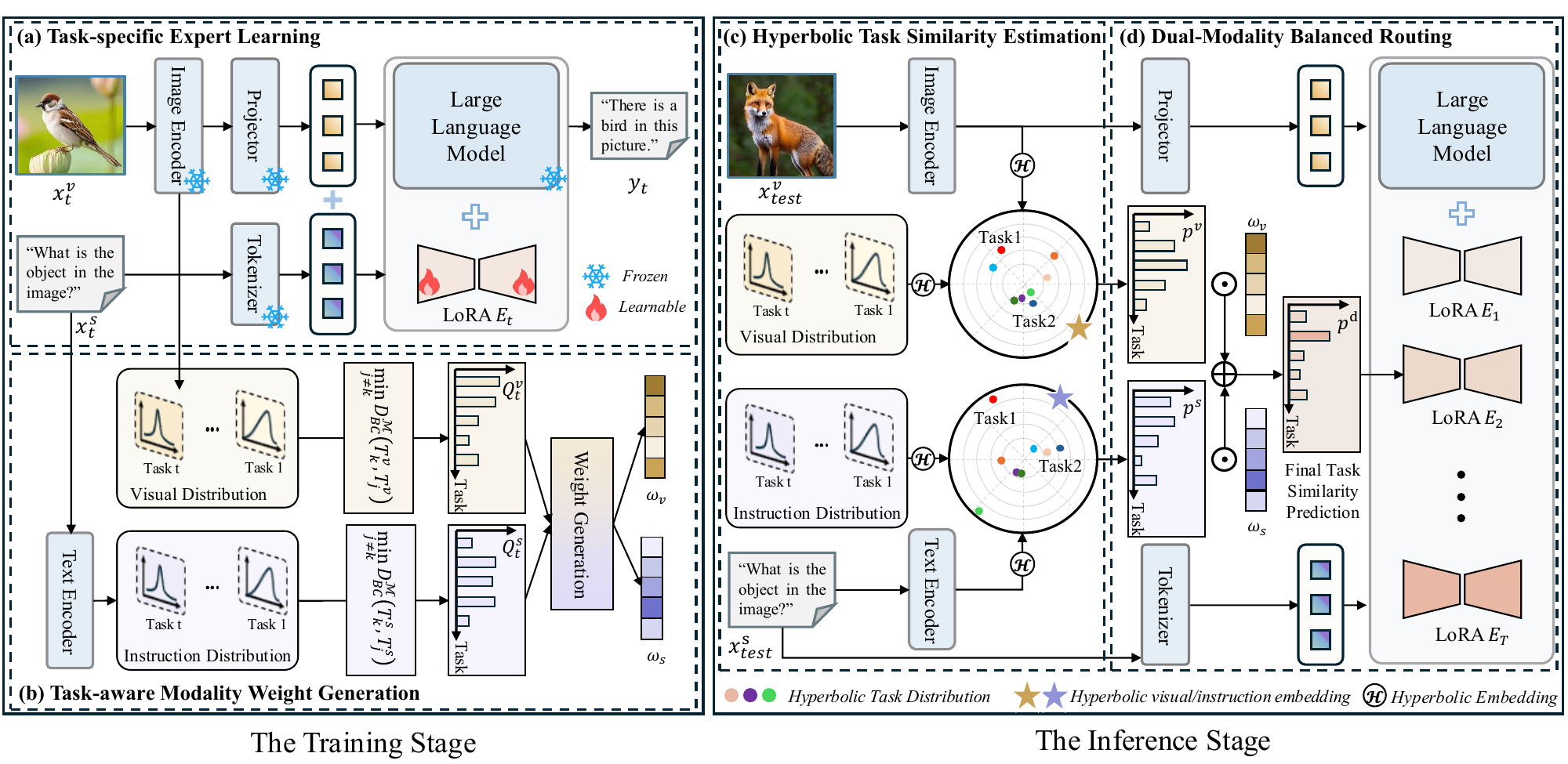} 
  \caption{
    \textbf{The overview of our proposed Hyper-LLaVA framework.} 
    The process consists of two stages: 
    \textbf{(Left) The Training Stage}, comprising \textbf{(a)} standard instruction tuning with task-specific LoRA experts and \textbf{(b)} a Task-aware Modality Reliability Estimation module. This module quantifies task separability via Bhattacharyya Distance ($D_{BC}$) to derive adaptive modality weights ($\omega_v, \omega_t$).
    \textbf{(Right) The Inference Stage}, specifically \textbf{(c)} the Hyperbolic Uncertainty-aware Modality-Balanced Routing. We map features into the Poincaré ball to capture task uncertainty and fuse modality similarities ($S_v, S_t$) using the learned weights for precise expert selection.
  }
  \label{fig:framework}
\end{figure*}
\section{Related Work}
\subsection{Multimodal Large Language Models}
The rapid advancement of Large Language Models (LLMs) ~\cite{touvron2023llama} has paved the way for Multimodal Large Language Models (MLLMs), which extend the reasoning capabilities of LLMs to the visual domain. Proprietary models such as GPT-4V ~\cite{achiam2023gpt} and Gemini~\cite{team2023gemini}have demonstrated exceptional performance in understanding and reasoning about complex visual inputs. In the open-source community, MLLMs typically adopt a modular architecture comprising a vision encoder (e.g., CLIP ~\cite{radford2021learning,liu2025stop,xu2026vision} or SigLIP ~\cite{zhai2023sigmoid}), a pre-trained LLM backbone, and a vision-language connector.
Early prominent works like BLIP-2 ~\cite{li2023blip}, and InstructBLIP ~\cite{dai2023instructblip} utilized a Q-Former to bridge the modality gap, effectively compressing visual information into query tokens for the LLM. MiniGPT-4 ~\cite{zhu2023minigpt} demonstrated that aligning a frozen visual encoder with a frozen LLM using a single projection layer could yield strong emergent abilities. Building on this streamlined design, LLaVA ~\cite{liu2023visual} and its enhanced version LLaVA-1.5~\cite{liu2024improved} employed a Multi-Layer Perceptron (MLP) connector and leveraged GPT-4 generated data for visual instruction tuning, establishing a robust baseline for general-purpose multimodal tasks.


\subsection{Continual Instruction Tuning}
While instruction tuning significantly enhances the zero-shot capabilities of MLLMs, maintaining these abilities across a continuous stream of changing tasks remains a formidable challenge due to the phenomenon of catastrophic forgetting. Consequently, Multimodal Continual Instruction Tuning (MCIT) has emerged as a critical research frontier, aiming to balance the stability of previously acquired knowledge with the plasticity required for new tasks. To rigorously evaluate this capability, comprehensive benchmarks such as CoIN ~\cite{chen2024coin} and CITB ~\cite{zhang2023citb} have been established, alongside domain-specific settings like VQACL~\cite{zhang2023vqacl}, revealing that standard fine-tuning strategies suffer severe performance degradation in sequential settings. To mitigate this, a dominant trend involves parameter-efficient architecture expansion, particularly using Mixture-of-Experts frameworks. Approaches like MoELoRA~\cite{chen2024coin}, Continual-LLaVA~\cite{cao2024continual}, and LLaVA-c~\cite{liu2025llava} utilize dynamic adapters, while more advanced frameworks like CL-MoE ~\cite{huai2025cl} and SMoLoRA~\cite{wang2025smolora}  introduce dual-momentum updates or dual-forgetting mechanisms to refine expert routing. Other structural innovations include BranchLoRA~\cite{zhang2025enhancing}, LoRA in LoRA ~\cite{che2025lora}, and Progressive LoRA ~\cite{yu2025progressive}, which hierarchically expand the parameter space to accommodate new tasks. Additionally, HiDe-LLaVA ~\cite{guo2025hide} proposes a hierarchical decoupling framework, performing task-specific expansion on top layers while fusing task-general knowledge in lower layers. In parallel, regularization and optimization strategies have been proposed to constrain learning trajectories, such as GNSP ~\cite{peng2025gnsp}, which projects gradients into the null space of previous tasks, and SEFE~\cite{chen2025sefe}, which targets specific types of forgetting via tailored regularization. Furthermore, ModalPrompt~\cite{zeng2025modalprompt} exploits dual-modality guidance for task retrieval.

\section{The Proposed Method}

\subsection{Preliminaries}
\textbf{Multi-modal Continual Instruction Tuning (MCIT).}
MCIT focuses on a sequential learning scenario where a Multi-modal Large Language Model (MLLM), parameterized by $\theta$, is trained on a stream of $T$ tasks $\mathcal{T} = \{T_1, T_2, \dots, T_T\}$. Each task $T_t$ consists of a dataset $\mathcal{D}_t = \{(x^v_{t,i}, x^s_{t,i}, y_{t,i})_{i=1}^{N_t}\}$, where $x^v$ represents the visual input, $x^s$ is the textual instruction, and $y$ is the target response. The goal is to sequentially adapt the model to task $T_t$ while retaining the knowledge of previous tasks $T_{1:t-1}$, without accessing their training data.

\subsection{Task-specific Expert Learning}

Following previous works~\cite{guo2025hide}, we adopt Low-Rank Adaptation (LoRA) as a Parameter-Efficient Fine-Tuning (PEFT) strategy to adapt the Multimodal Large Language Model (MLLM) to downstream tasks while mitigating computational overhead. LoRA freezes the pre-trained model weights $W_0 \in \mathbb{R}^{d \times k}$ and injects trainable rank decomposition matrices into the linear layers. Specifically, for each adapted layer, LoRA introduces two low-rank matrices $B \in \mathbb{R}^{d \times r}$ and $A \in \mathbb{R}^{r \times k}$, with rank $r \ll \min(d, k)$. The modified forward pass is computed as:
\begin{equation}
W' = W_0 + \Delta W = W_0 + BA
\end{equation}
where $W_0$ remains frozen, $A$ is initialized from a Gaussian distribution, and $B$ is initialized to zero.
To strictly isolate task-specific knowledge and prevent catastrophic forgetting, we maintain a pool of LoRA experts $\mathcal{E} = \{E_1, \dots, E_T\}$. Each expert $E_t$ corresponds to the specific parameters learned exclusively for task $t$. Consequently, the weight update for a specific task $t$ is expressed as:
\begin{equation}
\Delta W_t = B_t A_t
\end{equation}
where $B_t\in \mathbb{R}^{d\times r}$ and $A_t \in \mathbb{R}^{r \times k}$ denote the task-specific matrices.
Given an input image $x^v$ and instruction $x^s$, the model is optimized to generate response $y$ by minimizing the autoregressive cross-entropy loss, updating only the parameters of the current expert $E_t$. This isolation prevents catastrophic forgetting of previous tasks $T_{1:t-1}$ at the parameter level.

\subsection{Task-aware Modality Weight Generation}
While task-specific experts ensure parameter-level isolation, robustly routing inputs to the correct expert during inference is non-trivial due to the modality imbalance problem. Static fusion strategies often fail when one modality is noisy or non-discriminative. To address this, we propose an adaptive weighting mechanism governed by the Bhattacharyya Coefficient (BC), which is computed at the conclusion of each training stage.

We quantify the discriminative power of a specific modality by evaluating the separability of task data distributions within its feature space. Specifically, at the end of training stage $t$, we compute the pair-wise Bhattacharyya Distance $D_{BC}$ between every pair of learned tasks in the current pool. To facilitate tractable computation, we assume the feature distributions of each task follow a multivariate Gaussian distribution with independent dimensions. Under this assumption, $D_{BC}$ between task $T_k$ and $T_j$ can be analytically derived by summing the distances across all feature dimensions $D$:
\begin{equation} \label{eq:dbc_calc}
\begin{split}
    D_{BC}(T_k,T_j) &= -\ln(BC(T_k, T_j)) \\
    &= -\sum_{d=1}^{D} \ln\left(BC_d\left(T_k^{(d)}, T_j^{(d)}\right)\right)
\end{split}
\end{equation}
where the single-dimension component $\ln(BC_d)$ is defined as:
\begin{equation} \label{eq:ln_bc_d}
\begin{split}
    \ln(BC_d) &= -\frac{1}{4} \frac{(\mu_{k,d} - \mu_{j,d})^2}{\sigma_{k,d}^2 + \sigma_{j,d}^2} \\
    &\quad + \frac{1}{2} \ln \left( \frac{2\sigma_{k,d}\sigma_{j,d}}{\sigma_{k,d}^2 + \sigma_{j,d}^2} \right)
\end{split}
\end{equation}
where, $\mu_{k,d}$ and $\sigma_{k,d}^2$ denote the mean and variance of task $k$ at the $d$-th dimension. Based on these pair-wise distances, we derive the separation vectors $\mathbf{Q}_t^v$ and $\mathbf{Q}_t^s$ for visual and instruction modalities, respectively. The element corresponding to a given task $T_k$ represents its discriminability score $m_k$, defined as the distance to its nearest neighbor among other tasks:
\begin{equation}
m_k^{\mathcal{M}} = \min_{j \neq k} D_{BC}^{\mathcal{M}}(T_k, T_j), \quad \mathcal{M} \in \{v, s\}
\end{equation}
A larger $m_k$ implies that task $k$ occupies a highly unique region in the feature space of modality $\mathcal{M}$, suggesting that this modality is reliable for identifying task $k$.
Due to magnitude inconsistencies, we perform normalization within each modality using the average separation scale:
\begin{equation}
\hat{m}_k^{\mathcal{M}} = \frac{m_k^{\mathcal{M}}}{\frac{1}{t} \sum_{j=1}^{t} m_j^{\mathcal{M}} + \epsilon}, \quad \mathcal{M} \in \{v, s\}
\end{equation}
Finally, the normalized scores are passed through a temperature-scaled Softmax function to yield the adaptive modality weights ($\omega_v$, $\omega_s$). The calculation method for the $k$-th task is as follows:
\begin{equation}
\label{eq:omega}
    \omega_k^v,\omega_k^s = \text{Softmax}(\hat{m}_k^v,\hat{m}_k^s)
\end{equation}

\begin{table*}[t]
  \caption{Comparison of the proposed Hyper-LLaVA method with existing approaches in CoIN Benchmark.}
  \label{tab:comparison}
  \centering
  \setlength{\tabcolsep}{0.73mm}{
  \begin{tabular}{l|c|cccccccc|>{\columncolor{mygreen2}}c>{\columncolor{mygreen2}}ccc}
    \toprule
    \rowcolor{mygray} &  & \multicolumn{8}{c|}{\textbf{Accuracy on Each Task (\%)}} & \multicolumn{4}{c}{\textbf{Aggregate Results (\%)}} \\
   \rowcolor{mygray} \multirow{-2}{*}{\textbf{Method}} & \multirow{-2}{*}{\textbf{Venue}}& SQA & TVQA & ImgNet & GQA & VizWiz & Gron & VQAv2 & OVQA & MFN$\uparrow$ & MAA$\uparrow$ & MFT$\uparrow$ & BWT$\uparrow$ \\
    \midrule
    
    Zero-shot & - & 49.91 & 2.88 & 0.33 & 2.08 & 0.90 & 0.00 & 0.68 & 0.17 & 7.12 & - & - & - \\
    Multi-task & - & 56.77 & 49.35 & 95.55 & 56.65 & 53.90 & 30.09 & 59.50 & 55.65 & 57.18 & - & - & - \\
    \hline
    
    LwF & \scriptsize{\textcolor{darkgray}{\textit{TPAMI 2017}}} & 74.45 & 49.70 & 39.30 & 52.00 & 50.45 & 7.05 & 62.25 & 47.80 & 47.88 & 59.71 & 68.13 & -20.26 \\
    EWC & \scriptsize{\textcolor{darkgray}{\textit{NAS 2017}}} & 75.20 & 55.36 & 67.50 & 54.70 & 52.90 & 15.40 & 64.45 & 35.05 & 52.57 & 61.63 & 65.59 & -13.02 \\
    O-LoRA & \scriptsize{\textcolor{darkgray}{\textit{CVPR 2023}}} & 76.90 & 42.65 & 15.85 & 40.25 & 45.10 & 0.30 & 54.35 & 54.00 & 41.18 & 56.28 & 56.99 & -15.82 \\

    MoE-LoRA &\scriptsize{\textcolor{darkgray}{\textit{NeurIPS 2024}}} & 76.90 & 42.65 & 15.85 & 40.25 & 45.10 & 0.30 & 54.35 & 54.00 & 41.18 & 56.28 & 56.99 & -15.82 \\
    HiDe-LLaVA & \scriptsize{\textcolor{darkgray}{\textit{ACL 2025}}} & 73.20 & 56.92 & 69.28 & {61.33} & {50.76} & {59.18} & {67.12} & 64.76 & \underline{62.82} & - & - & - \\
    ModalPrompt & \scriptsize{\textcolor{darkgray}{\textit{EMNLP 2025}}}  & 68.42 & 56.40 & 41.13 & {61.11} & 50.13 & 36.69 & {66.90} & 59.68 & 55.06 & 59.24 & 56.39 & {-1.33} \\
    SEFE & \scriptsize{\textcolor{darkgray}{\textit{ICML 2025}}}  & 75.35 & {58.66} & {83.10} & 54.25 & 48.85 & 16.75 & 65.35 & {66.25} & 58.57 & \underline{63.04} & {69.02} & -10.45 \\    
    \hline
    
    \rowcolor{mygreen2}Hyper-LLaVA& \scriptsize{\textcolor{darkgray}{\textit{This Paper}}} & {77.10} & {59.02} & {93.49} & 60.54 & {57.88} & {51.81} & 66.76 & {65.13} & \textbf{66.47} & \textbf{68.79} & {68.77} & {-2.30} \\
    \bottomrule
    
  \end{tabular}
  }
\end{table*}

\subsection{Hyperbolic Task Similarity Estimation}
Standard Euclidean metrics for similarity estimation often suffer from the ``gravity well'' effect, where high-variance general tasks overshadow low-variance specific tasks. To enable precise expert selection, we propose embedding task distributions in a Hyperbolic space, specifically the Poincaré ball model $(\mathbb{D}^d, g_p)$.
For each learned task $T_k$, we construct a embedding function $\mathcal{H}(\boldsymbol{\mu}, V)$ to project its Euclidean statistics (centroid $\boldsymbol{\mu}_k$ and aggregate variance $V_k = \sum \boldsymbol{\sigma}^2_k$) into the Poincaré ball:
\begin{equation}
    z_k = \mathcal{H}(\boldsymbol{\mu}_k, V_k) = r_k \cdot \frac{\boldsymbol{\mu}_k}{\|\boldsymbol{\mu}_k\|_2}
\end{equation}
The key point lies in encoding the task's uncertainty $V_k$ into the hyperbolic radius $r_k \in [0, 1)$. We employ an adaptive scaling strategy to ensure robustness across varying feature scales during incremental learning:
\begin{equation}
r_k = \exp\left( - \frac{\gamma}{\bar{V}_t + \epsilon} \cdot V_k \right)
\label{eq:adaptivescaling}
\end{equation}
where $\bar{V}_t$ is the moving average of variances across all known tasks, and $\gamma$ is a scaling constant. This formulation ensures that specific tasks are mapped to the boundary ($r_k \to 1$), while general tasks are mapped near the center ($r_k \to 0$).
For an incoming test input $x$, we treat it as a deterministic point distribution with zero variance, embedding it to $z=\mathcal{H}(x,0)$ with a radius $r \approx 1$. The similarity between the test input and an expert $E_k$ is defined by the negative Poincaré distance:
\begin{equation} \label{eq:hyperbolic_dist}
\begin{split}
    D_p(x, E_k) = - \text{arccosh}\bigg( 1 
    + 2 \frac{\|z - z_k\|^2}{(1 - \|z\|^2)(1 - \|z_k\|^2)} \bigg)
\end{split}
\end{equation}
For a specific task located at the boundary ($r_k \to 1$), the denominator $(1 - \|z_k\|^2)$ in the distance formula approaches zero. This causes the distance to grow exponentially with even minor angular deviations, strictly enforcing specificity. Conversely, general tasks near the center remain tolerant to variations. Proof as shown in \textbf{Theorem~\ref{theorem:pd_boundary}}. This geometric property effectively resolves the ``gravity well'' issue, preventing high-variance tasks from overshadowing specific ones.

\subsection{Dual-Modality Balanced Routing}
In the final inference phase, we integrate the hyperbolic similarity measures with the task-aware modality weights to perform robust expert routing. For a test input $x=(x^v_{test}, x^s_{test})$, we independently compute the Poincaré distance for all available experts in both visual ($D_p^v$) and instruction ($D_p^s$) modalities.
These scores are then fused via our Adaptive Balancing mechanism. To ensure dynamic range alignment between modalities, we utilize adaptive temperatures ($\tau_v, \tau_s$) derived from the standard deviation of the scores:
\begin{equation}
    p^v = \text{Softmax}(D_p^v / \tau_v), \quad p^s = \text{Softmax}(D_p^s / \tau_s)
\end{equation}
The final dual modal decision score $p^d$ is computed as the weighted sum of the probability distributions from both modalities, using the pre-computed reliability weights $(\omega_v, \omega_s)$:
\begin{equation}
\label{eq:Sfinal}
p^d = \omega_v \odot p^v + \omega_s \odot p^s
\end{equation}
where $\odot$ denotes element-wise multiplication. To rigorously justify this adaptive weighting strategy, we introduce the following theoretical bounds regarding estimation error.

\begin{restatable}[Optimality of Variance-Inverse Weighting]{lemma}{lemmaOptimality}
\label{lemma:optimality}
Let $q^v$ and $q^s$ be the observed similarity scores from visual and instruction modalities, modeled as the true signal perturbed by independent Gaussian noise with variances $\sigma_v^2$ and $\sigma_s^2$, respectively. The estimation error of a linear fusion strategy is minimized when the weights are inversely proportional to the modality uncertainty: $w^* = \sigma_s^2 / (\sigma_v^2 + \sigma_s^2)$.
\end{restatable}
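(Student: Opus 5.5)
We model $q^v = q + \epsilon_v$ and $q^s = q + \epsilon_s$, where $q$ is the true (deterministic) similarity signal and $\epsilon_v \sim \mathcal{N}(0,\sigma_v^2)$, $\epsilon_s \sim \mathcal{N}(0,\sigma_s^2)$ are independent. We restrict attention to linear fusions $\hat q_w = w\,q^v + (1-w)\,q^s$ with $w \in \mathbb{R}$. The constraint that the weights sum to one is what makes the fusion unbiased: $\mathbb{E}[\hat q_w] = q$ for every $w$. The estimation error is measured by the mean squared error $\mathbb{E}[(\hat q_w - q)^2]$, which under unbiasedness reduces to $\mathrm{Var}(\hat q_w)$. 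The proof then becomes a one-dimensional quadratic minimization.

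\textbf{Step 1: the error functional.} Write $\hat q_w - q = w\epsilon_v + (1-w)\epsilon_s$. Because the noises are independent, the cross term vanishes, so
\begin{equation*}
L(w) = \mathbb{E}[(\hat q_w - q)^2] = w^2\sigma_v^2 + (1-w)^2\sigma_s^2 .
\end{equation*}

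\textbf{Step 2: minimization.} $L$ is a strictly convex quadratic in $w$, with $L''(w) = 2(\sigma_v^2+\sigma_s^2) > 0$ as long as at least one variance is positive. Setting
\begin{equation*}
L'(w) = 2w\sigma_v^2 - 2(1-w)\sigma_s^2 = 0
\end{equation*}
gives the unique global minimizer $w^* = \sigma_s^2/(\sigma_v^2+\sigma_s^2)$. Equivalently, $w^* : (1-w^*) = \sigma_v^{-2} : \sigma_s^{-2}$, so each modality's weight is proportional to the inverse of its own variance. This is the sense of ``inversely proportional to the modality uncertainty'' in the statement.

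\textbf{Step 3: the optimal error.} Substituting $w^*$ gives
\begin{equation*}
L(w^*) = \frac{\sigma_v^2\sigma_s^2}{\sigma_v^2+\sigma_s^2} \le \min(\sigma_v^2,\sigma_s^2).
\end{equation*}
Comparing with $L(1/2) = (\sigma_v^2+\sigma_s^2)/4$ shows the advantage over equal averaging, which is strict whenever $\sigma_v \ne \sigma_s$.

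\textbf{Main obstacle: modeling, not calculus.} The step that needs care is pinning down the model. We must state explicitly that the fusion is restricted to affine combinations with weights summing to one, and that the error is the mean squared error. Without the sum-to-one constraint, the MSE-optimal weights would depend on $q$ itself through a shrinkage term. With the constraint in place, the calculation above is routine and yields the lemma's formula. We would also note that the Gaussian assumption is not actually needed; only independence (or uncorrelatedness) and finite variances are used.
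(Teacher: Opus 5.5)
Your proof is correct and follows the paper's argument essentially step for step. Both use the same affine fusion, the same error functional $w^2\sigma_v^2 + (1-w)^2\sigma_s^2$, the same first-order condition giving $w^* = \sigma_s^2/(\sigma_v^2+\sigma_s^2)$ and the same comparison against $w = 1/2$, while your additions (unbiasedness from the sum-to-one constraint, the strict-convexity check, and the remark that only uncorrelatedness is used rather than Gaussianity) are sound refinements that the paper leaves implicit.
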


Building on this lemma, we establish the theoretical basis for our task-aware modality weighting:

\begin{restatable}[Superiority of BC-based Fusion]{theorem}{theoremBC}
\label{theorem:bc_superiority}
Under the assumption of approximately isotropic Gaussian feature distributions, the Bhattacharyya Distance $D_{BC}$ scales inversely with feature variance ($D_{BC} \propto 1/\sigma^2$). Therefore, the Dual-Modality Balanced Routing strategy, which assigns weights based on $D_{BC}$, effectively approximates the optimal inverse-variance fusion derived in \textbf{Lemma~\ref{lemma:optimality}}.
\end{restatable}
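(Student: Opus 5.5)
The argument has two parts: first establish the scaling $D_{BC}\propto 1/\sigma^2$ from Eq.~\eqref{eq:dbc_calc}--\eqref{eq:ln_bc_d} under the isotropic assumption, then show that the weights of Eq.~\eqref{eq:omega} reproduce the ordering and ratio of the optimal weights in Lemma~\ref{lemma:optimality}. The statement only claims an approximation, so the aim is to match inverse-variance weighting to first order rather than to prove exact equality.

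\textbf{Step 1: scaling of $D_{BC}$.} Within a fixed modality $\mathcal{M}$, take isotropic and approximately homoscedastic task distributions, $\sigma_{k,d}^2\approx\sigma_{j,d}^2\approx\sigma_{\mathcal{M}}^2$ for all $d$. Then the log term in Eq.~\eqref{eq:ln_bc_d} is $\tfrac12\ln(1)=0$, and summing over $d$ gives
\begin{equation*}
D_{BC}^{\mathcal{M}}(T_k,T_j)=\frac{\|\boldsymbol{\mu}_k-\boldsymbol{\mu}_j\|_2^2}{8\,\sigma_{\mathcal{M}}^2}.
\end{equation*}
If the variances differ only slightly, I will bound the log term by a second-order quantity in $(\sigma_k-\sigma_j)/\sigma$, so it is negligible. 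Taking the nearest neighbour, $m_k^{\mathcal{M}}=\Delta_k^{\mathcal{M}}/(8\sigma_{\mathcal{M}}^2)$, where $\Delta_k^{\mathcal{M}}$ is the squared distance from task $k$'s centroid to its nearest other centroid. This is the claimed $D_{BC}\propto 1/\sigma^2$, with the centroid geometry $\Delta$ as the proportionality factor.

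\textbf{Step 2: link to the noise model of Lemma~\ref{lemma:optimality}.} Model the similarity score $q^{\mathcal{M}}$ of a test sample as the true task signal plus noise induced by within-task feature spread. After rescaling the score so that the inter-task margin $\Delta$ is unit, the effective noise variance is $\sigma_{\mathcal{M}}^2/\Delta_k^{\mathcal{M}}$. Hence
\begin{equation*}
m_k^{\mathcal{M}}\propto\frac{1}{\sigma_{\mathcal{M},\mathrm{eff}}^2}.
\end{equation*}
Lemma~\ref{lemma:optimality} gives the optimal visual weight $w^*=\sigma_s^2/(\sigma_v^2+\sigma_s^2)$, which equals $m_k^v/(m_k^v+m_k^s)$, i.e.\ the normalized inverse variances.

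\textbf{Step 3: the Softmax approximation.} The per-modality normalization by average separation is a common positive rescaling, so it preserves the relevant ratio up to a modality constant, which I will treat as absorbed. Eq.~\eqref{eq:omega} gives
\begin{equation*}
\omega_k^v=\frac{1}{1+e^{-(\hat m_k^v-\hat m_k^s)}}.
\end{equation*}
This is monotone in the difference $\hat m_k^v-\hat m_k^s$, agrees with $w^*$ at equal reliability (both $1/2$), and moves weight toward the lower-variance modality. A first-order expansion around $\hat m^v=\hat m^s$ shows agreement with $\hat m^v/(\hat m^v+\hat m^s)$ up to a temperature-dependent slope; this is where the temperature scaling enters.

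\textbf{Main obstacle.} Step 3 is the weakest point: Softmax is not literally the ratio $m^v/(m^v+m^s)$, and the two agree only locally or in ordering. I would state the conclusion as approximation in that sense: the weights lie on the same side of $1/2$ as $w^*$ and are monotone in it. I would also make explicit that the cross-modality normalization constants in Step 3 and the margin rescaling in Step 2 are modelling assumptions, not consequences of the earlier results.
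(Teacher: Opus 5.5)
Your Step 1 is the paper's proof. The paper sets $\boldsymbol{\Sigma}_1=\boldsymbol{\Sigma}_2=\sigma^2\mathbf{I}$ in the closed-form Gaussian $D_{BC}$, drops the log-determinant term, and gets $D_{BC}=\|\boldsymbol{\mu}_1-\boldsymbol{\mu}_2\|_2^2/(8\sigma^2)$. It then, for a \emph{fixed} inter-task separation, simply asserts that weighting by $D_{BC}$ approximates $w^*\propto 1/\sigma^2$. It never deals with the nearest-neighbour minimum, the per-modality normalization, or the Softmax in Eq.~\eqref{eq:omega}.

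Your Steps 2 and 3 go further than the paper, and usefully so:
\begin{itemize}
\item Folding the separation into an effective variance $\sigma_{\mathcal{M}}^2/\Delta_k^{\mathcal{M}}$ removes the need for both modalities to share the same centroid geometry, and gives $w^*=m_k^v/(m_k^v+m_k^s)$ exactly.
\item Writing the two-way Softmax as a sigmoid of $\hat m_k^v-\hat m_k^s$ makes precise what ``approximates'' means. At $\hat m^v=\hat m^s=a$ the sigmoid has slope $1/4$ and the ratio has slope $1/(4a)$. The normalization puts typical scores at $a\approx 1$, so the first-order match needs no extra temperature.
\end{itemize}

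One correction to Step 3: the normalization is not a ``common positive rescaling,'' and it does not preserve the cross-modality ratio. Dividing by $\frac{1}{t}\sum_j m_j^{\mathcal{M}}$ is modality-specific. Under your own homoscedastic assumption (and ignoring $\epsilon$) it gives $\hat m_k^{\mathcal{M}}=\Delta_k^{\mathcal{M}}/\bar\Delta^{\mathcal{M}}$, where $\bar\Delta^{\mathcal{M}}$ is the mean of the $\Delta_j^{\mathcal{M}}$. So $\sigma_{\mathcal{M}}^2$ cancels entirely.

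The constant you absorb is $\sigma_{\mathcal{M}}^2/\bar\Delta^{\mathcal{M}}=1/(8\bar m^{\mathcal{M}})$. Setting it equal across modalities is precisely assuming the two modalities are equally reliable on average. Any global reliability gap between vision and instruction is therefore erased. What you can actually establish is inverse-variance weighting of \emph{task-relative} reliabilities.

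Two consequences follow:
\begin{itemize}
\item Your ``same side of $1/2$ as $w^*$'' conclusion holds only under that equal-average-reliability assumption.
\item $\omega_k^v$ is not a monotone function of $w^*$ alone, since it depends on a difference while $w^*$ depends on a ratio. For example, $(\hat m^v,\hat m^s)=(2,1)$ and $(4,2)$ share $w^*=2/3$ but give different $\omega^v$. It is only increasing in each score separately.
\end{itemize}

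You were right to flag these as modelling assumptions. The paper's argument relies on the same assumptions but leaves them implicit.
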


The proof of Lemma ~\ref{lemma:optimality} and Theorem ~\ref{theorem:bc_superiority} are provided in Appendix ~\ref{app:theorem}. Finally, based on the optimized decision score $p^d$, the expert $E_{k^*}$ with the highest similarity is selected to perform the specific task reasoning and response generation.

\section{Experiments}
\subsection{Experimental Setup}

\begin{table*}[t]
  \caption{Comparison of the proposed Hyper-LLaVA method with existing approaches in UCIT Benchmark.}
  \label{tab:comparison_ucit}
  \centering
  \setlength{\tabcolsep}{1.2mm}{
  \begin{tabular}{l|c|cccccc|>{\columncolor{mygreen2}}c>{\columncolor{mygreen2}}ccc}
    \toprule
    \rowcolor{mygray}& & \multicolumn{6}{c|}{\textbf{Accuracy on Each Task (\%)}} & \multicolumn{4}{c}{\textbf{Aggregate Results (\%)}} \\
  \rowcolor{mygray} \multirow{-2}{*}{\textbf{Method}}  &\multirow{-2}{*}{\textbf{Venue}}  & ImgNetR & ArxivQA & VizWiz & IconQA & CLEVR & Flickr & MFN$\uparrow$ & MAA$\uparrow$ & MFT$\uparrow$ & BWT$\uparrow$ \\
    \midrule
    
    Zero-shot & - & 16.27 & 53.73 & 38.39 & 19.20 & 20.63 & 41.88 & 31.68 & - & - & - \\
    Multi-task & - & 91.67 & 90.83 & 57.87 & 78.43 & 76.63 & 61.72 & 76.19 & - & - & - \\
    \hline
    
    LwF & \scriptsize{\textcolor{darkgray}{\textit{TPAMI 2017}}} & 40.27 & 75.93 & 42.76 & 44.38 & 37.43 & 56.34 & 49.52 & - & - & - \\
    EWC & \scriptsize{\textcolor{darkgray}{\textit{NAS 2017}}} & 39.05 & 77.88 & 43.24 & 45.33 & 39.72 & 55.94 & 50.20 & - & - & - \\
    O-LoRA & \scriptsize{\textcolor{darkgray}{\textit{CVPR 2023}}} & 77.50 & 78.07 & 44.50 & 63.13 & 64.73 & {58.16} & 64.35 & {78.02} & {76.01} & -13.99 \\
    
    MoE-LoRA & \scriptsize{\textcolor{darkgray}{\textit{NeurIPS 2024}}} & 70.07 & 77.70 & 44.69 & 50.03 & 54.03 & 57.34 & 58.98 & 75.08 & 71.17 & -14.63 \\
    HiDe-LLaVA & \scriptsize{\textcolor{darkgray}{\textit{ACL 2025}}} & {84.03} & 90.73 & 44.43 & 58.93 & 41.37 & 54.25 & 62.29 & 77.32 & 69.96 & {-9.20} \\
    ModalPrompt & \scriptsize{\textcolor{darkgray}{\textit{EMNLP 2025}}} & 74.43 & {92.00} & {55.92} & 44.27 & 53.97 & 43.67 & 60.70 & 70.57 & 60.76 & {-0.05} \\
    SEFE & \scriptsize{\textcolor{darkgray}{\textit{ICML 2025}}} & 80.83 & 78.00 & 47.01 & {69.63} & {65.83} & 57.92 & \underline{66.54} & \underline{78.76} & 75.98 & -11.33 \\
    \hline
    
   \rowcolor{mygreen2} Hyper-LLaVA & \scriptsize{\textcolor{darkgray}{\textit{This Paper}}} & {87.20} & {93.70} & {57.24} & {75.20} & {65.60} & {57.92} & \textbf{72.81} & \textbf{81.87} & {78.03} & -5.22 \\
    \bottomrule
    
  \end{tabular}
  }
\end{table*}

\textbf{Datasets}.
We use the CoIN~\cite{chen2024coin} and UCIT~\cite{guo2025hide} as our benchmark. CoIN contains eight vision-language tasks: VQAv2~\cite{goyal2017making}, ScienceQA (SQA)~\cite{lu2022learn}, TextVQA (TVQA)~\cite{singh2019towards}, ImageNet (ImgNet)~\cite{deng2009imagenet}, VizWiz~\cite{gurari2018vizwiz},GQA~\cite{hudson2019gqa}, REC-COCO~\cite{kazemzadeh2014referitgame, mao2016generation}, OCR-VQA (OVQA)~\cite{mishra2019ocr}.
UCIT~\cite{guo2025hide} is designed to address the potential data leakage issue in CoIN~\cite{chen2024coin}, where downstream tasks might overlap with the MLLM's pre-training data. UCIT contains six vision-language tasks, including ArxivQA~\cite{li2024multimodal}, CLEVR-Math~\cite{lindstrom2022clevr}, IconQA~\cite{lu2021iconqa},ImageNet-R~\cite{hendrycks2021facesrobustnesscriticalanalysis}, VizWiz-caption~\cite{gurari2018vizwiz}.

\textbf{Evaluation Metrics}.
we evaluate the two benchmarks with four metrics: (1)Mean Fine-tune Accuracy (MFT), the average accuracy of each task immediately after it is learned, which represents the upper-bound performance or plasticity of the model. (2)Mean Final Accuracy (MFN), the average accuracy of all tasks after the complete training sequence, which reflects the final knowledge retained. (3)Mean Average Accuracy (MAA), the average performance of all learned tasks at every incremental step. This measures the stability of the model throughout the entire life cycle. (4) Backward Transfer (BWT), the average drop in performance for previous tasks after learning new ones, measuring catastrophic forgetting. The definition of these metrics are provided in our Appendix ~\ref{app:metrics}.

\textbf{Compared Methods}.
We compare Hyper-LLaVA with a comprehensive set of baselines that cover both traditional continual learning and recent multimodal continual instruction tuning methods. The traditional continual learning baselines include LwF~\cite{li2017learning}, which applies knowledge distillation, and EWC~\cite{kirkpatrick2017overcoming}, which constrains important parameters. We also include O-LoRA~\cite{wang2023orthogonal}, which learns tasks in orthogonal subspaces. 
In addition, several recent multimodal continual instruction tuning approaches are included for comparison. MoELoRA~\cite{chen2024coin} employs a mixture-of-experts adapter design. ModalPrompt~\cite{zeng2025modalprompt} exploits dual-modality prompts for task adaptation. HiDe-LLaVA~\cite{guo2025hide} adopts a hierarchical decoupling strategy, while SEFE~\cite{chen2025sefe} focuses on style diversification and regularization.
Furthermore, Zero-shot performance (inference without fine-tuning) and Joint Training performance (multi-task learning on all data) are reported as the lower and upper bounds, respectively.


\textbf{Implementation details}.
Consistent with the settings in CoIN~\cite{chen2024coin} , we use LLaVA-v1.5-7B~\cite{liu2024improved} as our backbone, which integrates a Vicuna-7B LLM with a CLIP-L/14-336~\cite{radford2021learning} visual encoder. LoRA~\cite{hu2022lora} is applied to all linear layers of the LLM, with a rank of $r=64$, $\alpha=128$ in CoIN benchmark and $r=48$, $\alpha=96$ in UCIT benchmark, following the previous works~\cite{guo2025hide}. During training, the cosine decay schedule is adopted and the learning rates is set to $2e-4$ for LoRA modules. We train for $1$ epoch per task with a batch size of $24$, and the warm-up ratio to $0.03$. The scaling coefficient $\gamma$ is set to $1.0$. All experiments are conducted on 8 NVIDIA H800 GPUs.

\begin{table}[t]
  \centering
  \caption{Ablation study on the effectiveness of Hyperbolic Metric and Adaptive Fusion (metric: MFN).}
  \label{tab:ablation_main}
  \setlength{\tabcolsep}{3.8mm}{
  \begin{tabular}{cc|c}
    \toprule
    \rowcolor{mygray}\textbf{Metric Space} & \textbf{Fusion Strategy} & \textbf{MFN (\%)} \\
    \midrule
    Euclidean & Static & 56.75 \\
    Poincaré & Static & 62.67 \textcolor{teal}{\small{(+5.92)}} \\
    Euclidean & Adaptive & 66.42 \textcolor{teal}{\small{(+9.67)}} \\
    \rowcolor{mygreen2}\textbf{Poincaré} & \textbf{Adaptive (Ours)} & \textbf{66.62} \textcolor{teal}{\textbf{\small{(+9.87)}}} \\
    \bottomrule
  \end{tabular}
  }
\end{table}

\begin{figure}
    \begin{center}
	\includegraphics[width=1.0\linewidth]{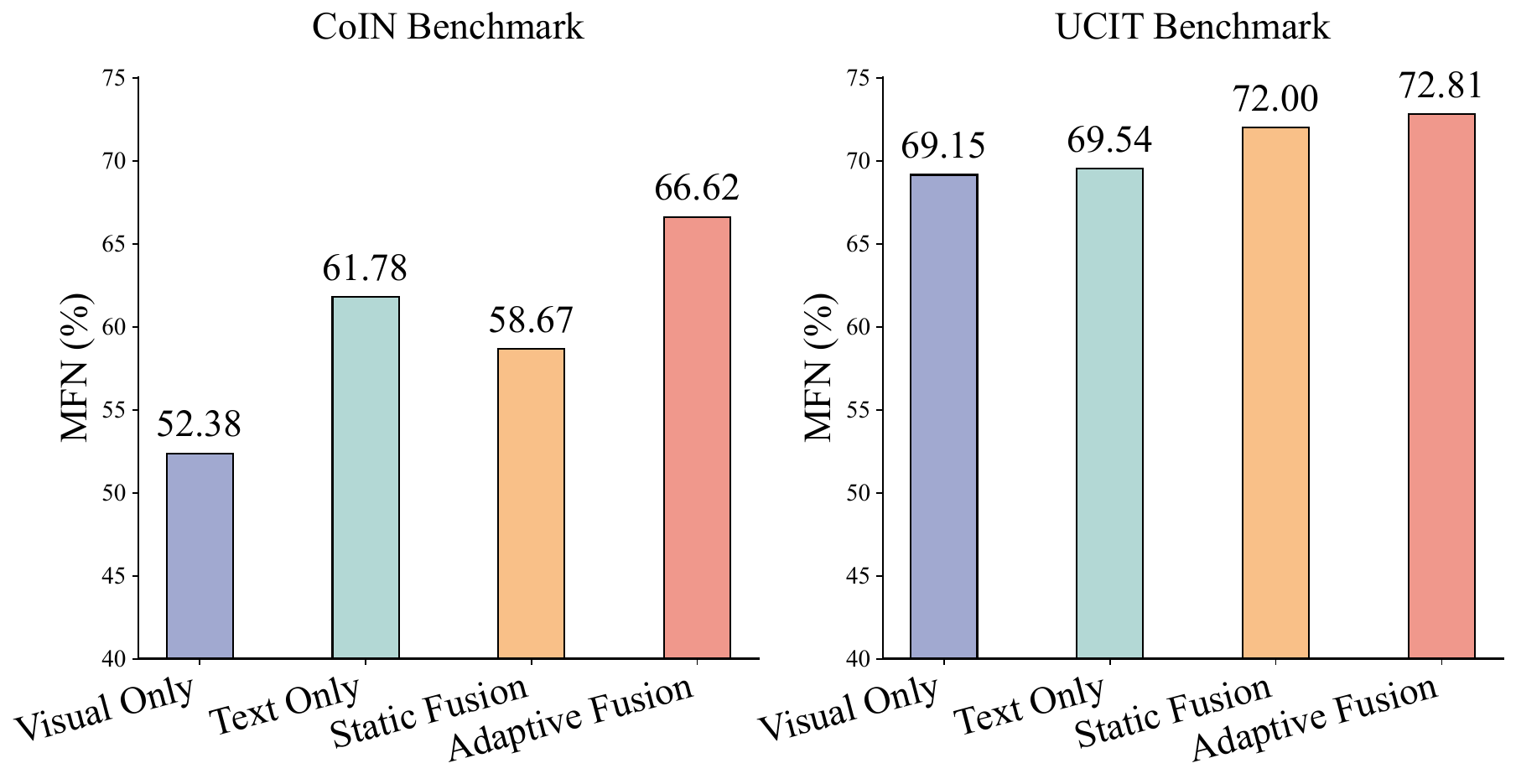}
	\caption{Ablation study on modality fusion strategies.
    }
	\label{fig:ablation_modality}
    \end{center}    
\end{figure}

\begin{figure*}[t]
    \begin{center}
	\includegraphics[width=0.95\linewidth]{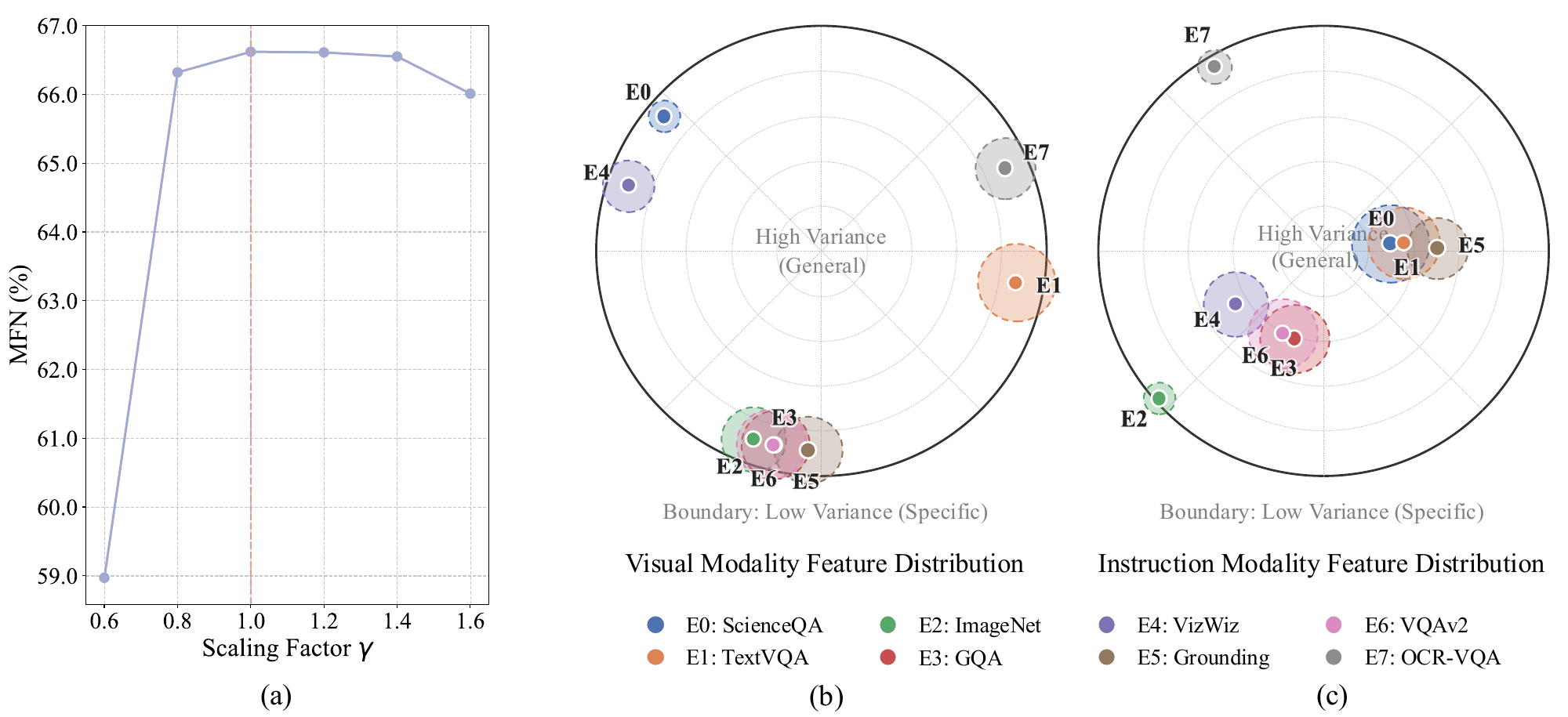}
	\caption{Ablation study on modality fusion strategies (a), 
    Task feature distributions in the Poincaré ball (b) and (c).
    }
	\label{fig:visualization1}
    \end{center}    
\end{figure*}

\subsection{Comparison with Existing Approaches}

\textbf{Results on CoIN.}
Table~\ref{tab:comparison} reports the performance comparison on the CoIN benchmark.
Classical continual learning methods, such as LwF and EWC, show limited final performance and suffer from severe catastrophic forgetting due to continual parameter updates.
When compared with the state-of-the-art MoE-based method HiDe-LLaVA, our approach achieves a \textbf{3.65\%} improvement in MFN. This gain is attributed to the proposed Hyperbolic Task Similarity Estimation mechanism and Dual-Modality Balancing scheme, which together enhance expert routing reliability. Besides, compared with SEFE, which relies on LoRA merging, Hyper-LLaVA yields \textbf{7.90\%} and \textbf{5.75\%} improvements in MFN and MAA, respectively. These improvements are attributed to the robust expert routing strategy of Hyper-LLaVA, which avoids cross-expert fusion that may otherwise introduce knowledge interference.

\textbf{Results on UCIT.}
Table~\ref{tab:comparison_ucit} summarizes the results on the UCIT benchmark. Compared with state-of-the-art MoE-based frameworks (O-LoRA, MoE-LoRA, HiDe-LLaVA, and ModalPrompt), Hyper-LLaVA achieves at least \textbf{8.46\%} and \textbf{3.85\%} improvements in MFN and MAA, respectively. These gains demonstrate the effectiveness of the proposed Hyperbolic Uncertainty-aware Modality-Balanced Routing mechanism in enhancing task prediction reliability. 
When compared with the expert-merging-based method SEFE, improvements of \textbf{6.27\%} in MFN and \textbf{3.11\%} in MAA are achieved, further verifying the superiority of the proposed strategy in expert selection. Moreover, Hyper-LLaVA establishes state-of-the-art MFT performance on the UCIT benchmark, with at least a \textbf{2.02\%} improvement over existing methods, which further supports its effectiveness in selecting appropriate experts.

It is noted that inferior BWT performance, which reflects forgetting, is observed compared with ModalPrompt. This behavior arises because ModalPrompt introduces additional constraints to explicitly mitigate forgetting on previous tasks. While such constraints improve stability, they restrict model plasticity and may limit adaptation to new data. In contrast, the overall performance metrics MFN and MAA indicate that Hyper-LLaVA achieves a more favorable stability–plasticity trade-off than ModalPrompt.


\begin{figure*}[t]
    \begin{center}
    \vspace{-5pt}
	\includegraphics[width=0.95\linewidth]{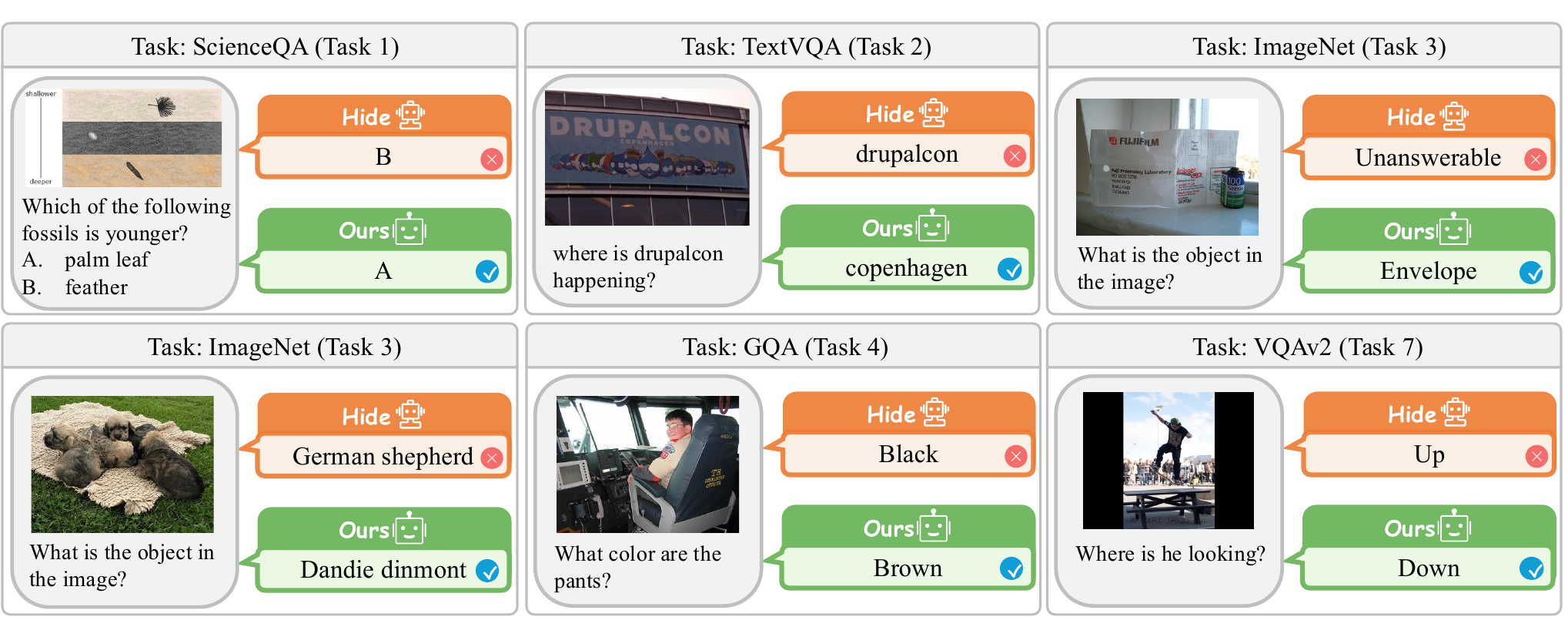}
	\caption{
    Visualization of model predictions across different tasks in comparison with state-of-the-art Hide~\cite{guo2025hide}.
    }
	\label{fig:goodcase}
    \end{center}    
    \vspace{-5pt}
\end{figure*}

\subsection{Ablation Studies}
\textbf{Impact of Core Components.}
To verify the impact of core Components on model performance improvement, we isolate the contributions of the hyperbolic metric and BC-based adaptive fusion in Table~\ref{tab:ablation_main}.
Simply swapping the Euclidean metric for Poincaré distance yields a consistent performance bump (e.g., $\textbf{+5.92\%}$ with Static Fusion), confirming that hyperbolic geometry provides a superior embedding for capturing task hierarchy and uncertainty.
However, the true leap in performance comes from BC-based adaptive fusion. This delivers a massive gain of $\textbf{+9.67\%}$, which signals that the imbalance of modality is not a minor inconvenience but a primary bottleneck in MCIT.
The best results are achieved only when both components are combined, validating their complementary nature.

\textbf{Impact of Modality Fusion Strategies.}
Figure~\ref{fig:ablation_modality} offers a fascinating glimpse into how different fusion strategies behave within the Poincaré space.
On the CoIN benchmark, the system actually performs better using only instructions than it does with static fusion.
This results in a clear case of modality pollution: blindly averaging visual signals introduces noise that degrades decision-making.
This result flips on UCIT, where static fusion beats the unimodal baselines, indicating that vision and instruction are cooperating effectively.
Crucially, BC-based adaptive fusion reigns supreme in both scenarios on CoIN and UCIT.
This proves that the proposed BC-based weighting acts as an effective gatekeeper, suppressing harmful modalities when they introduce noise and amplifying cross-modal synergy when it offers a competitive advantage.

\textbf{Hyperparameter Sensitivity Analysis.}
In order to ensure robustness across varying feature scales during continual learning, we introduce the adaptive scaling strategy, where the selection of the scaling coefficient plays an essential role in the model performance.
In Figure ~\ref{fig:visualization1} (a), we analyze the impact of different scaling coefficients $\gamma$. Overall, performance follows a bell-shaped curve. The small scaling coefficients lead to insufficient task separation, whereas large coefficients reduce hierarchical distinctions. This indicates that the critical point $\gamma=1.0$ provides an effective balance to take advantage of hyperbolic geometry.

\subsection{Visualization}

\textbf{Task Geometry in the Poincaré Ball.}
Figures ~\ref{fig:visualization1} (b) and (c) show the visualization of task distribution embeddings in the Poincaré disk, which exposes striking cross-modal asymmetries. Similar tasks clump together in the visual domain, while the instruction domain gives a different result. ScienceQA, for instance, is distinct in visual space but buried deep within a central cluster in instruction space, indicating a clear sign of low instruction discriminability. Conversely, due to the low variance of its own data distribution, ImageNet hugs the boundary of the instruction space, geometrically encoding its high specificity. These structural disparities validate our dual-modality balanced fusion strategy: the model must dynamically pivot to the modality where the task is most separable.

\textbf{Visualization of model predictions.}
To intuitively demonstrate the efficacy of Hyper-LLaVA in mitigating task interference, we present qualitative comparisons in Figure~\ref{fig:goodcase}.
The result of ImageNet illustrates the preservation of fine-grained discriminative capability. The baselines hallucinate generic categories, while Hyper-LLaVA retains the capacity to identify a Dandie Dinmont. We also see the ``gravity well" effect in action, baselines frequently misroute specific queries to high-variance experts, resulting in generic Unanswerable outputs, whereas Hyper-LLaVA boundary mapping enforces the necessary specificity. Beyond classification, complex reasoning scenarios further underscore the robustness of the framework. By calibrating the signal strength of vision versus text on the fly, the model successfully navigates diverse reasoning paths without succumbing to the noise of a dominant but irrelevant modality.

\section{Conclusion}
In this paper, we presented Hyper-LLaVA, a novel framework for Multimodal Continual Instruction Tuning (MCIT) that addresses the fundamental limitations of existing parameter routing mechanisms from an uncertainty-aware perspective. Specifically, to improve intra-modality task matching, Hyper-LLaVA accesses the sample to task distribution similarity in the Hyperbolic space. Besides, to alleviate the degradation brought by unreliable modality, Hyper-LLaVA quantifies the task matching ambiguity within each modality to achieve adaptive balancing between task matching across modalities. Based on the complementary intra- and inter-modality task matching enhancement, Hyper-LLaVA achieves significantly more accurate parameter routing, outperforming state-of-the-art MCIT approaches by a large margin. Our results underscore the importance of modeling task uncertainty and leveraging non-Euclidean geometries for routing in MLLMs. Future work could explore the extension of this hyperbolic uncertainty modeling to a broader range of multimodal foundation models and experts.
\section*{Acknowledgements}
This work was a research achievement of National Engineering Research Center of New Electronic Publishing Technologies, and was supported by the National Natural Science Foundation of China (62376011), and the National Key R\&D Program of China (2024YFA1410000).

\section*{Impact Statement}

This paper presents work whose goal is to advance the field of Machine
Learning. There are many potential societal consequences of our work, none
which we feel must be specifically highlighted here.

\nocite{langley00}

\bibliography{example_paper}
\bibliographystyle{icml2026}

\newpage
\appendix
\onecolumn
\section{Theorem}
\label{app:theorem}
\subsection{Proof of Optimality for Modality-Adaptive Fusion}
\lemmaOptimality*

\begin{proof}
We define a linear fusion strategy as $\hat{q} = w q^v + (1-w) q^s$, where $w \in [0, 1]$ is the weight assigned to the visual modality. The mean squared error (MSE) of this estimator is equivalent to its variance:
\begin{equation}
\label{eq:lossfusion}
\mathcal{L}(w) = \text{Var}(\hat{q}) = w^2 \sigma_v^2 + (1-w)^2 \sigma_s^2
\end{equation}

For the Static Fusion strategy, the weights are fixed at $w = 0.5$. Substituting this into Eq \eqref{eq:lossfusion}:
\begin{equation}
\mathcal{L}_{static} = 0.5^2 \sigma_v^2 + 0.5^2 \sigma_s^2 = \frac{\sigma_v^2 + \sigma_s^2}{4}
\end{equation}

For the Adaptive Fusion strategy, we seek the optimal weight $w^*$ that minimizes the error $\mathcal{L}(w)$. Differentiating Eq \eqref{eq:lossfusion} with respect to $w$ and setting it to zero:
\begin{equation}
\frac{\partial \mathcal{L}(w)}{\partial w} = 2w \sigma_v^2 - 2(1-w) \sigma_s^2 = 0
\end{equation}
Solving for $w$, we obtain the optimal weight which is inversely proportional to the variance:
\begin{equation}
\label{eq:optimal_w}
w^* = \frac{\sigma_s^2}{\sigma_v^2 + \sigma_s^2}
\end{equation}
Substituting $w^*$ back into Eq. \eqref{eq:lossfusion}, the minimum error for the adaptive strategy is:
\begin{equation}
\mathcal{L}_{adaptive} = \left(\frac{\sigma_s^2}{\sigma_v^2 + \sigma_s^2}\right)^2 \sigma_v^2 + \left(\frac{\sigma_v^2}{\sigma_v^2 + \sigma_s^2}\right)^2 \sigma_s^2 = \frac{\sigma_v^2 \sigma_s^2}{\sigma_v^2 + \sigma_s^2}
\end{equation}

To compare the two strategies, we examine the difference $\Delta = \mathcal{L}_{static} - \mathcal{L}_{adaptive}$:
\begin{equation}
\Delta = \frac{\sigma_v^2 + \sigma_s^2}{4} - \frac{\sigma_v^2 \sigma_s^2}{\sigma_v^2 + \sigma_s^2} = \frac{(\sigma_v^2 + \sigma_s^2)^2 - 4\sigma_v^2 \sigma_s^2}{4(\sigma_v^2 + \sigma_s^2)}
\end{equation}
Expanding the numerator:
\begin{equation}
(\sigma_v^2 + \sigma_s^2)^2 - 4\sigma_v^2 \sigma_s^2 = \sigma_v^4 + 2\sigma_v^2 \sigma_s^2 + \sigma_s^4 - 4\sigma_v^2 \sigma_s^2 = (\sigma_v^2 - \sigma_s^2)^2
\end{equation}
Since $(\sigma_v^2 - \sigma_s^2)^2 \ge 0$ and the variances are positive, it follows that $\Delta \ge 0$. Therefore:
\begin{equation}
\mathcal{L}_{adaptive} \le \mathcal{L}_{static}
\end{equation}
The equality holds only when $\sigma_v^2 = \sigma_s^2$. This completes the proof of the lemma.
\end{proof}

\theoremBC*

\begin{proof}
While Lemma \ref{lemma:optimality} establishes that the optimal fusion weight is inversely proportional to the noise variance $\sigma^2$ (as shown in Eq. \eqref{eq:optimal_w}), estimating this variance directly during inference is intractable. We prove that the Bhattacharyya Distance $D_{BC}$ serves as a reliable proxy for the inverse variance.

Consider two task feature distributions in a given modality, modeled as multivariate Gaussians $P_i(x) = \mathcal{N}(\boldsymbol{\mu}_i, \boldsymbol{\Sigma}_i)$ for $i \in \{1,2\}$. The Bhattacharyya distance between $P_1$ and $P_2$ admits the closed-form expression:
\begin{equation}
D_{BC}(P_1, P_2)
=
\frac{1}{8}
(\boldsymbol{\mu}_1 - \boldsymbol{\mu}_2)^\top
\boldsymbol{\Sigma}^{-1}
(\boldsymbol{\mu}_1 - \boldsymbol{\mu}_2)
+
\frac{1}{2}
\ln
\frac{\det \boldsymbol{\Sigma}}
{\sqrt{\det \boldsymbol{\Sigma}_1 \det \boldsymbol{\Sigma}_2}},
\label{eq:bc_general}
\end{equation}
where $\boldsymbol{\Sigma} = \frac{1}{2}(\boldsymbol{\Sigma}_1 + \boldsymbol{\Sigma}_2)$.

Under the common assumption that task features are approximately isotropic with equal variance, i.e., $\boldsymbol{\Sigma}_1 = \boldsymbol{\Sigma}_2 = \sigma^2 \mathbf{I}$, the second (log-determinant) term in Eq~\eqref{eq:bc_general} vanishes. The equation simplifies to:
\begin{equation}
D_{BC}(P_1, P_2)=\frac{1}{8\sigma^2}
\left\|
\boldsymbol{\mu}_1 - \boldsymbol{\mu}_2
\right\|_2^2.
\label{eq:bc_isotropic}
\end{equation}
Equation~\eqref{eq:bc_isotropic} demonstrates that, for a fixed inter-task separation $\|\boldsymbol{\mu}_1 - \boldsymbol{\mu}_2\|_2$, the Bhattacharyya distance scales inversely with the variance:
\begin{equation}
D_{BC} \propto \frac{1}{\sigma^2}
\end{equation}
Consequently, a larger $D_{BC}$ corresponds to lower feature uncertainty (smaller $\sigma^2$) and higher discriminability. By assigning weights based on $D_{BC}$ via our DIW module, we are effectively approximating the optimal weighting scheme $w^* \propto 1/\sigma^2$ derived in Lemma \ref{lemma:optimality}. Therefore, our BC-based fusion strategy yields a lower estimation error than static fusion.
\end{proof}

\subsection{Proof of the Boundary Sensitivity in Poincaré Ball.}
\begin{theorem}[Gradient Explosion at the Boundary]
\label{theorem:pd_boundary}
Let $d_{\mathbb{D}}(\mathbf{u}, \mathbf{v})$ be the Poincaré distance between two points $\mathbf{u}, \mathbf{v} \in \mathbb{D}^n$. Let $\mathbf{u}$ be fixed and $\mathbf{v} = \mathbf{u} + \boldsymbol{\delta}$ be a small perturbation. As the norm $\|\mathbf{u}\| \to 1$ (approaching the boundary), the rate of change of the hyperbolic distance with respect to the Euclidean perturbation $\boldsymbol{\delta}$ approaches infinity. Conversely, as $\|\mathbf{u}\| \to 0$ (approaching the center), the rate of change approaches the Euclidean rate.
\end{theorem}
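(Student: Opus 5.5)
The plan is to linearize the Poincaré distance around $\mathbf{u}$ and read off the local metric factor. Write
\[
d_{\mathbb{D}}(\mathbf{u},\mathbf{v})=\operatorname{arccosh}(1+2X),\qquad X=\frac{\|\boldsymbol{\delta}\|^2}{(1-\|\mathbf{u}\|^2)(1-\|\mathbf{u}+\boldsymbol{\delta}\|^2)},
\]
which is the same expression as the distance in the hyperbolic task similarity estimation, without the minus sign.

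The first step is to expand for small $\boldsymbol{\delta}$. Since $X=O(\|\boldsymbol{\delta}\|^2)$, we can use $\operatorname{arccosh}(1+2X)=2\sqrt{X}\,(1+O(X))$. We also have $1-\|\mathbf{u}+\boldsymbol{\delta}\|^2=(1-\|\mathbf{u}\|^2)-2\langle\mathbf{u},\boldsymbol{\delta}\rangle-\|\boldsymbol{\delta}\|^2$. Together these give
\[
d_{\mathbb{D}}(\mathbf{u},\mathbf{u}+\boldsymbol{\delta})=\frac{2\|\boldsymbol{\delta}\|}{1-\|\mathbf{u}\|^2}\bigl(1+O(\|\boldsymbol{\delta}\|)\bigr),
\]
where the constant in the $O(\cdot)$ depends on $\|\mathbf{u}\|$. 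This is the conformal factor $\lambda_{\mathbf{u}}=2/(1-\|\mathbf{u}\|^2)$ of the metric $g_p$.

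The second step makes "rate of change" precise. I define it as the directional derivative
\[
\lim_{t\to 0^+} \frac{d_{\mathbb{D}}(\mathbf{u},\mathbf{u}+t\mathbf{e})}{t},\qquad \|\mathbf{e}\|=1.
\]
By the expansion this limit equals $\lambda_{\mathbf{u}}$, and it is independent of the direction $\mathbf{e}$. An equivalent route is to note that the Riemannian metric is $g_p=\lambda_{\mathbf{u}}^2 g_E$, so the length of any short curve is $\lambda_{\mathbf{u}}$ times its Euclidean length to first order.

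Both limits then follow directly. As $\|\mathbf{u}\|\to 1$ we get $\lambda_{\mathbf{u}}\to\infty$, which is the claimed blow-up. As $\|\mathbf{u}\|\to 0$ we get $\lambda_{\mathbf{u}}\to 2$. So near the center the rate is the Euclidean rate up to the fixed constant $2$. This constant comes from curvature normalization $-1$; the Euclidean metric scaled by $4$ is the exact limit, and I would state the result as "proportional to" the Euclidean rate.

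The main obstacle is rigor in the expansion, namely the uniformity of the $O(\|\boldsymbol{\delta}\|)$ term as $\|\mathbf{u}\|\to1$. Near the boundary, $\langle\mathbf{u},\boldsymbol{\delta}\rangle$ is comparable to $1-\|\mathbf{u}\|^2$ unless $\|\boldsymbol{\delta}\|\ll 1-\|\mathbf{u}\|^2$. I would therefore take the limit $t\to0$ first at each fixed $\mathbf{u}$, and only then send $\|\mathbf{u}\|\to1$ or $0$. This avoids any need for a uniform bound and matches the infinitesimal phrasing of the statement.
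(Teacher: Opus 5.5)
Your proposal is correct and uses the same key fact as the paper's proof. The local ratio of hyperbolic to Euclidean displacement at $\mathbf{u}$ is the conformal factor $\lambda_{\mathbf{u}} = 2/(1-\|\mathbf{u}\|^2)$, which tends to $\infty$ as $\|\mathbf{u}\|\to 1$ and to $2$ as $\|\mathbf{u}\|\to 0$; both you and the paper note that the latter is Euclidean only up to that constant. The paper reads $\lambda_{\mathbf{x}}$ straight off the metric $g_{\mathbf{x}} = \lambda_{\mathbf{x}}^2 g^E$, which is your ``equivalent route.'' Your expansion of the closed-form $\operatorname{arccosh}$ distance, with the rate defined as $\lim_{t\to 0^+} d_{\mathbb{D}}(\mathbf{u},\mathbf{u}+t\mathbf{e})/t$ taken at fixed $\mathbf{u}$, makes explicit a step the paper leaves implicit: that this infinitesimal ratio really is the derivative of the distance function in the statement.
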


\begin{proof}
The Riemannian metric tensor $g_{\mathbf{x}}$ of the Poincaré ball model at a point $\mathbf{x}$ is given by a conformal scaling of the Euclidean metric $g^E$:
\begin{equation}
g_{\mathbf{x}} = \lambda_{\mathbf{x}}^2 g^E, \quad \text{where } \lambda_{\mathbf{x}} = \frac{2}{1 - \|\mathbf{x}\|^2}
\end{equation}
The infinitesimal hyperbolic distance $ds_{\mathbb{D}}$ corresponds to the Euclidean distance $ds_{E}$ scaled by the conformal factor $\lambda_{\mathbf{x}}$:
\begin{equation}
ds_{\mathbb{D}} = \lambda_{\mathbf{x}} \|d\mathbf{x}\|_2 = \frac{2 \|d\mathbf{x}\|_2}{1 - \|\mathbf{x}\|^2}
\end{equation}
Now, consider the sensitivity, defined as the ratio of hyperbolic change to Euclidean change:
\begin{equation}
\text{Sensitivity}(\mathbf{x}) = \frac{ds_{\mathbb{D}}}{ds_{E}} = \frac{2}{1 - \|\mathbf{x}\|^2}
\end{equation}

\textbf{Case 1: General Tasks (Center).}
For tasks mapped near the center, $\|\mathbf{x}\| \to 0$. The sensitivity becomes:
\begin{equation}
\lim_{\|\mathbf{x}\| \to 0} \text{Sensitivity}(\mathbf{x}) = 2
\end{equation}
This implies that near the center, the hyperbolic distance behaves similarly to Euclidean distance (up to a constant factor of 2). The metric space is "flat" and tolerant to perturbations, suitable for high-variance general tasks.

\textbf{Case 2: Specific Tasks (Boundary).}
For tasks mapped near the boundary, $\|\mathbf{x}\| \to 1$. The sensitivity becomes:
\begin{equation}
\lim_{\|\mathbf{x}\| \to 1} \text{Sensitivity}(\mathbf{x}) = \infty
\end{equation}
This implies that at the boundary, even an infinitesimal Euclidean displacement $\|d\mathbf{x}\|_2$ results in an infinitely large hyperbolic distance. This geometric property enforces the high selectivity required for specific, low-variance tasks, effectively preventing them from being overshadowed by general tasks.
\end{proof}

\subsection{Connection between Bhattacharyya Coefficient and Classification Error.}
\begin{theorem}[Bhattacharyya Bound on Bayes Error]
\label{theorem:BCerror}
Let $P_1(x)$ and $P_2(x)$ be the probability density functions of two tasks in a specific modality. The discriminability of these two tasks is bounded by the Bhattacharyya Coefficient $BC(P_1, P_2)$. Specifically, the probability of misclassifying a sample from Task 1 as Task 2 (Bayes Error rate $P_e$) is upper-bounded by the Bhattacharyya Coefficient.
\end{theorem}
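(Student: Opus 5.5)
The plan is the classical Bhattacharyya bound on the Bayes error. Take prior probabilities $\pi_1,\pi_2$ for the two tasks, with $\pi_1+\pi_2=1$. The Bayes-optimal rule assigns $x$ to the task with the larger posterior. Its error rate is therefore
\begin{equation*}
P_e=\int \min\bigl(\pi_1 P_1(x),\,\pi_2 P_2(x)\bigr)\,dx .
\end{equation*}
The one-sided quantity in the statement, the mass of Task 1 samples sent to Task 2, is the same integral restricted to the decision region of Task 2. It is therefore bounded by $P_e$ up to the factor $\pi_1$, and it suffices to bound $P_e$.

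The key step is the elementary inequality $\min(a,b)\le\sqrt{ab}$ for $a,b\ge 0$. It holds because the smaller of two nonnegative numbers is at most their geometric mean. Applying it pointwise inside the integral gives
\begin{equation*}
P_e\le\sqrt{\pi_1\pi_2}\int\sqrt{P_1(x)P_2(x)}\,dx=\sqrt{\pi_1\pi_2}\,BC(P_1,P_2).
\end{equation*}
Since $\sqrt{\pi_1\pi_2}\le \tfrac12$, this yields $P_e\le \tfrac12 BC(P_1,P_2)\le BC(P_1,P_2)$. This is the claimed upper bound. The bound for the single-direction misclassification probability follows in the same way, by restricting the integration region before applying the inequality.

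I would then connect this to the quantities defined earlier. Because $D_{BC}=-\ln BC$ (Eq.~\eqref{eq:dbc_calc}), the bound can be rewritten as $P_e\le \tfrac12 e^{-D_{BC}}$. Under the independent-dimension Gaussian assumption, $BC$ factorizes over dimensions and each $\ln BC_d$ is given by Eq.~\eqref{eq:ln_bc_d}. So the per-modality score $m_k^{\mathcal M}=\min_{j\neq k}D_{BC}^{\mathcal M}(T_k,T_j)$ controls the worst pairwise confusion of task $k$ in that modality. This justifies treating a large $m_k^{\mathcal M}$ as reliability.

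The main obstacle is the imprecision of the statement rather than the mathematics. The phrase ``the probability of misclassifying a sample from Task 1 as Task 2'' must be tied to a specific decision rule and to specific priors. I would fix the Bayes rule and state the result for general priors, so that the clean form $P_e\le\tfrac12 BC$ appears with equal priors. I would also note that the bound holds for arbitrary densities, not only for Gaussian ones.
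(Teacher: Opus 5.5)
Your proof is correct and is essentially the paper's argument: apply $\min(a,b)\le\sqrt{ab}$ pointwise and integrate to get $BC(P_1,P_2)$. Carrying general priors is a useful refinement. It shows that the paper's $P_e=\int\min(P_1,P_2)\,dx$ is twice the equal-prior Bayes error (it is the sum of the two conditional error rates), so your $P_e\le\sqrt{\pi_1\pi_2}\,BC\le\tfrac12 BC$ is the correctly normalized form.

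One caution concerns the one-sided claim. Restricting to the Task~2 decision region $R_2=\{\pi_2P_2>\pi_1P_1\}$ gives only $\int_{R_2}P_1\,dx\le\sqrt{\pi_2/\pi_1}\,BC(P_1,P_2)$. So the clean bound by $BC$ holds when $\pi_1\ge\pi_2$, in particular under the paper's equal priors. It can fail badly when $\pi_1\ll\pi_2$, and you should state it that way rather than for arbitrary priors.
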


\begin{proof}
The Bayes error rate $P_e$ for a binary classification problem between two distributions with equal prior probabilities is defined as the integral of the minimum of the two densities:
\begin{equation}
P_e = \int \min(P_1(x), P_2(x)) dx
\end{equation}
Using the inequality $\min(a, b) \le \sqrt{ab}$ for any non-negative real numbers $a, b$, we can write:
\begin{equation}
\min(P_1(x), P_2(x)) \le \sqrt{P_1(x) P_2(x)}
\end{equation}
Integrating both sides over the domain $\mathcal{X}$:
\begin{equation}
\int \min(P_1(x), P_2(x)) dx \le \int \sqrt{P_1(x) P_2(x)} dx
\end{equation}
The right-hand side is exactly the definition of the Bhattacharyya Coefficient $BC(P_1, P_2)$. Thus, we arrive at the Bhattacharyya bound:
\begin{equation}
P_e \le BC(P_1, P_2)
\end{equation}
Since the Bhattacharyya Distance is defined as $D_{BC} = -\ln(BC)$, minimizing the overlap $BC$ (or maximizing the distance $D_{BC}$) directly minimizes the upper bound of the classification error.
Therefore, by assigning higher weights to the modality with a larger $D_{BC}$ (smaller $BC$), our adaptive fusion strategy theoretically prioritizes the feature space with the lower potential for routing error.
\end{proof}

\newpage

\section{Algorithm}
\begin{algorithm}[H]
  \caption{Pipeline of Hyper-LLaVA}
  \label{alg:hyper_llava}
  \begin{algorithmic}[1]
    \STATE {\bfseries Input:} Sequential task datasets $\mathcal{D}_t = \{(x_{t,i}^v, x_{t,i}^{s}, y_{t,i})_{i=1}^{N_t}\}$.
    \STATE {\bfseries Initialize:} Base MLLM $\Theta$, Empty Expert Pool $\mathcal{E} = \emptyset$.
    
    \FOR{$t=1$ {\bfseries to} $T$}
        \STATE \textcolor{gray}{\# Training Stage: Acquire new knowledge}
        \STATE Initialize LoRA expert $E_t$ for task $t$.
        \FOR{minibatch $\mathcal{B} \in \mathcal{D}_t$}
            \STATE Update $E_t$ via autoregressive loss $\mathcal{L}_{CE}$.
        \ENDFOR
        \STATE Add $E_t$ to Expert Pool $\mathcal{E}$.
        
        \STATE \textcolor{gray}{\# Statistics Update}
        \STATE Extract features for $\mathcal{D}_t$ using frozen encoders.
        \STATE Compute Euclidean centroids $\boldsymbol{\mu}_t$ and variances $V_t$.
        \STATE Update adaptive scaling factor $\gamma$ based on $\{V_1, \dots, V_t\}$.
        
        \STATE \textcolor{gray}{\# Hyperbolic Task Similarity Estimation}
        \STATE Compute pair-wise Bhattacharyya Distances among $\{T_1, \dots, T_t\}$.
        \STATE Derive separation vectors $Q$ and modality weights $\omega$ via Eq\eqref{eq:omega}.
        
        \STATE \textcolor{gray}{\# Inference Stage (for a test input $x$)}
        \FOR{each test input $x = (x^v_{test}, x^s_{test})$}
            \STATE \textcolor{gray}{\# Hyperbolic Embedding}
            \STATE Map experts to Poincaré ball: $\mathbf{z}_k = \mathcal{H}(\boldsymbol{\mu}_k, V_k)$ ($r_k\propto V_k^{-1}$).
            \STATE Map query to Poincaré boundary: $\mathbf{z} = \mathcal{H}(x, 0)$ ($r\to 1$).
            
            \STATE \textcolor{gray}{\# Dual-Modality Balanced Routing}
            \FOR{$k=1$ {\bfseries to} $t$}
                \STATE Calculate Poincaré similarities $q^v, q^{s}$ via Eq\eqref{eq:hyperbolic_dist}.
                \STATE Calculate fuse scores $q^d$ via Eq \eqref{eq:Sfinal}
            \ENDFOR
            \STATE Select optimal expert: $k^* = \arg\max_k q^d$.
            \STATE Generate response $y_i$ using Expert $E_{k^*}$.
        \ENDFOR
    \ENDFOR
  \end{algorithmic}
\end{algorithm}

\newpage

\section{Evaluation Metrics}
\label{app:metrics}
To comprehensively evaluate the performance of the model in the continual learning setting, we employ four standard metrics. Let $T$ denote the total number of tasks in the sequential stream. We define an accuracy matrix $A \in \mathbb{R}^{T \times T}$, where $A_{t,i}$ represents the test accuracy on task $i$ after the model has finished learning task $t$. Based on this formulation, the metrics are defined as follows:

\noindent\textbf{1. Mean Fine-tune Accuracy (MFT).} 
This metric evaluates the model's \textit{plasticity}, measuring its capacity to acquire new knowledge from the current task. It is calculated as the average accuracy of each task $i$ immediately after it has been learned:
\begin{equation}
    \text{MFT} = \frac{1}{T}\sum_{i=1}^T A_{i,i}.
\end{equation}
A higher MFT indicates that the model effectively adapts to new domains without being overly constrained by previous knowledge.

\noindent\textbf{2. Mean Final Accuracy (MFN).} 
To assess the overall proficiency of the model upon the conclusion of the continual learning curriculum, we report the average accuracy across all $T$ tasks after the final training stage:
\begin{equation}
    \text{MFN} = \frac{1}{T}\sum_{i=1}^T A_{T,i}.
\end{equation}
This metric reflects the final knowledge retention and is the primary indicator of the model's ultimate performance.

\noindent\textbf{3. Mean Average Accuracy (MAA).} 
While MFN focuses solely on the final state, MAA provides a holistic view of the model's \textit{stability} throughout the entire lifecycle. It is computed as the average of the performance at every incremental step $t$:
\begin{equation}
    \text{MAA} = \frac{1}{T}\sum_{t=1}^T \left( \frac{1}{t}\sum_{i=1}^t A_{t,i} \right).
\end{equation}
A high MAA suggests that the model maintains consistent performance across both new and old tasks during the intermediate stages of training, rather than fluctuating significantly.

\noindent\textbf{4. Backward Transfer (BWT).} 
This metric quantifies the influence of learning new tasks on the performance of previously acquired tasks. It is defined as the average difference between the final accuracy of a task and its initial accuracy immediately after learning:
\begin{equation}
    \text{BWT} = \frac{1}{T}\sum_{i=1}^T (A_{T,i} - A_{i,i}).
\end{equation}
A negative BWT value indicates \textit{catastrophic forgetting}, where performance degrades over time. Conversely, a positive BWT implies positive backward transfer, where learning new tasks facilitates the performance on older tasks.

\newpage
\section{Inter-task Relevance Analysis}
Figure~\ref{fig:visualization2} visualizes inter-task relevance using the Log Bhattacharyya Coefficient. The visual modality exhibits clear block-diagonal structures, where tasks sharing image sources (e.g., GQA, Grounding, VQAv2) form dense clusters, implying high visual overlap and potential routing ambiguity. In contrast, the instruction modality shows a sparse structure with low off-diagonal relevance, indicating sharp task boundaries. This contrast motivates our dual-modality balanced routing strategy, which down-weights modalities with high task overlap and emphasizes more discriminative ones.
\begin{figure*}[h]
    \begin{center}
	\includegraphics[width=1.0\linewidth]{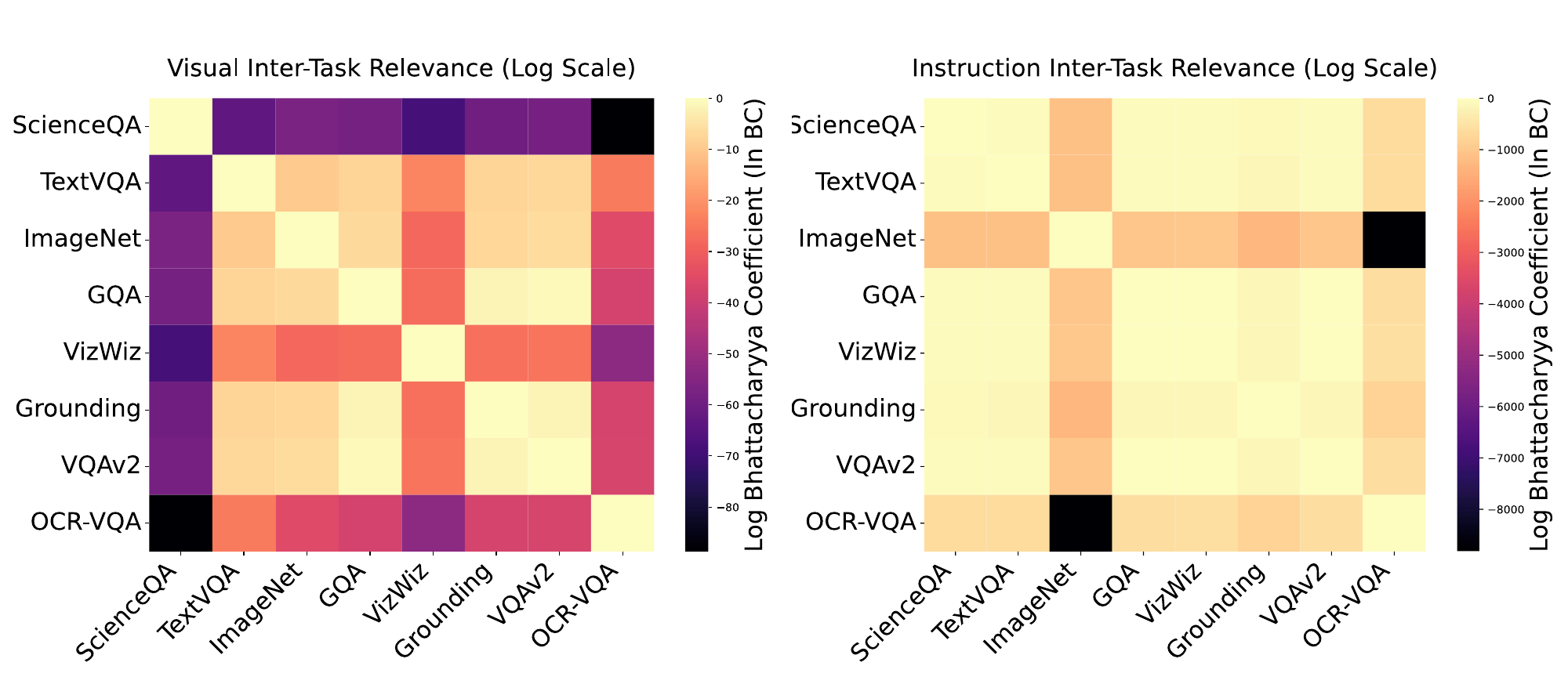}
	\caption{
    Log-scale Inter-task Relevance Matrices on CoIN Benchmark
    }
	\label{fig:visualization2}
    \end{center}    
\end{figure*}

\begin{figure*}[h]
    \begin{center}
	\includegraphics[width=1.0\linewidth]{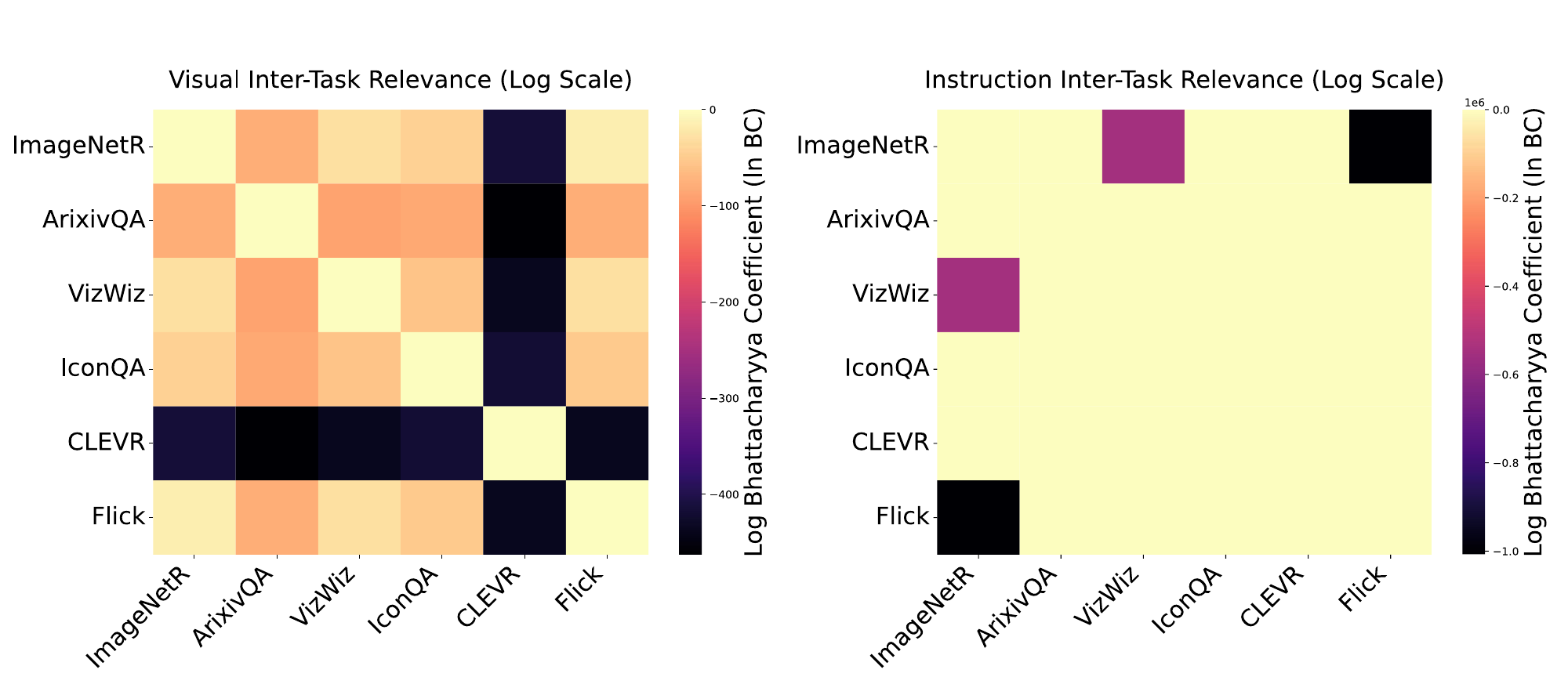}
	\caption{
    Log-scale Inter-task Relevance Matrices on UCIT Benchmark
    }
	\label{fig:visualization3}
    \end{center}    
\end{figure*}
\newpage

\section{Task-aware Modality Weight Analysis}
The Figure ~\ref{fig:visualization4} illustrates the dynamic assignment of modality weights on CoIN (left) and UCIT (right) benchmarks. Row $t$ represents the weight distribution used for routing at the end of stage $t$. We observe significant variation in $\omega_v$ across tasks (columns), confirming that the discriminative power of the visual modality is highly task-dependent. For instance, in CoIN, ScienceQA maintains a high throughout all stages, indicating its strong visual distinctiveness, whereas ImageNet shows extremely low visual weights, correctly deferring to the instruction modality as intended by our hyperbolic boundary mapping. In addition, the values of weights are not all close to 0 or 1. Many task ID predictions require a comprehensive judgment based on the integration of two modalities. This adaptive mechanism effectively prevents modality dominance by "noisy" modalities.

\begin{figure*}[h]
    \begin{center}
	\includegraphics[width=1.0\linewidth]{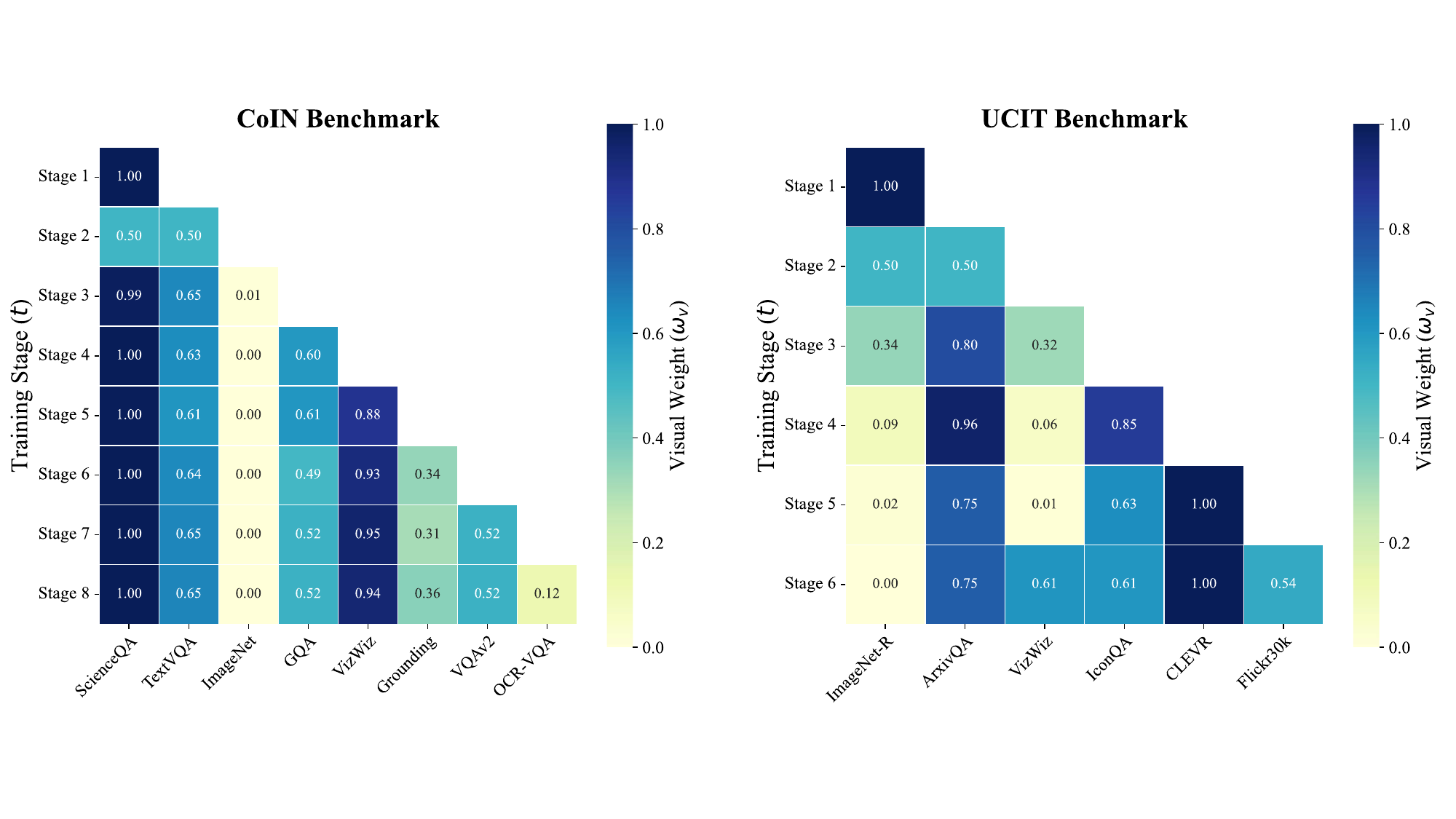}
	\caption{
        Evolution of Task-aware Modality Weight
    }
	\label{fig:visualization4}
    \end{center}    
\end{figure*}
\newpage
\section{More Comparison Results}
Figure~\ref{fig:goodcase2} presents further comparisons with the state-of-the-art HiDe model on the challenging ScienceQA, ImageNet and TextVQA benchmarks, which encompass diverse social and natural vision–instruction understanding scenarios. The results indicate that more consistent and robust adaptation is achieved by our model.

\begin{figure*}[h]
    \begin{center}
	\includegraphics[width=1.0\linewidth]{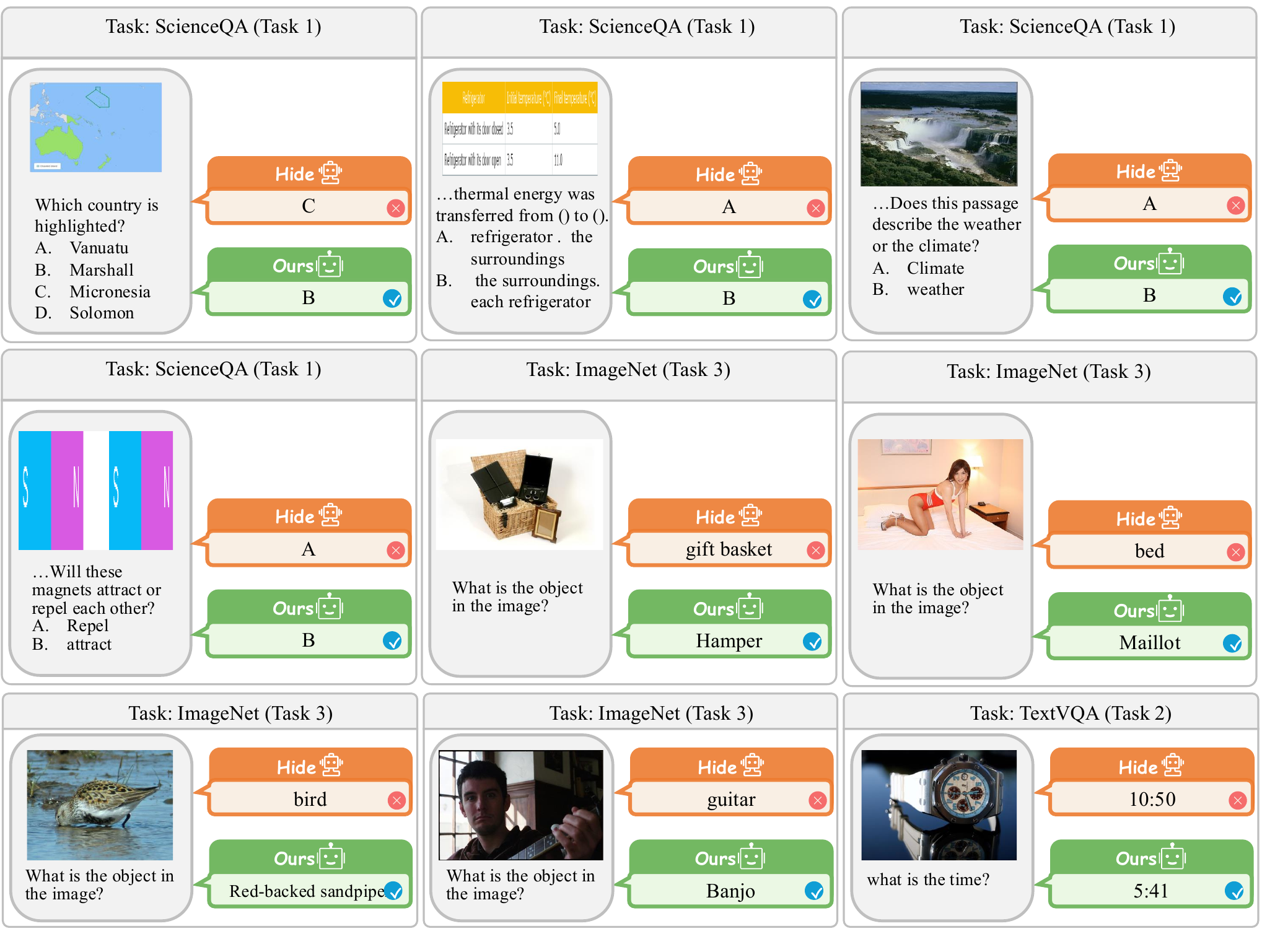}
	\caption{
        Visualization of model predictions across different tasks in comparison with state-of-the-art Hide~\cite{guo2025hide}.
    }
	\label{fig:goodcase2}
    \end{center}    
\end{figure*}
\end{document}